\newcommand{\mg}{\texttt{GMMF} }
\newtheorem{Def}{Definition}
\newtheorem{lem}{Lemma}
\newtheorem{rem}{Remark}
\newtheorem{thm}{Theorem}
\newtheorem{prop}{Proposition}
\newcommand{\E}{\mathbb{E}}
\newcommand{\Prb}{\mathbb{P}}
\newcommand{\cG}{\mathcal{G}}
\newcommand{\tA}{\widetilde{A}}
\newcommand{\tB}{\widetilde{B}}
\begin{document}
\title{Gotta match \textquotesingle em all: Solution diversification in graph matching matched filters}

\author{Zhirui Li, Ben Johnson, Daniel L. Sussman, Carey E. Priebe, Vince Lyzinski

\markboth{Journal of \LaTeX\ Class Files,~Vol.~14, No.~8, August~2015}%
{Shell \MakeLowercase{\textit{et al.}}: Bare Demo of IEEEtran.cls for IEEE Journals}

\IEEEcompsocitemizethanks{\IEEEcompsocthanksitem Z.L. and V.L. are with the Dept. of Math., Univ. of Maryland, College Park, MD. E-mail:  \href{mailto:zli198@umd.edu}{zli198@umd.edu} (ZL); \href{mailto:vlyzinsk@umd.edu}{vlyzinsk@umd.edu} (VL).
\IEEEcompsocthanksitem B.J. is with Jataware Corp., Washington, DC. E-mail: \href{mailto:ben@jataware.com}{ben@jataware.com}.
\IEEEcompsocthanksitem D.L.S. is with the Dept. of Math. and Stat., Boston Univ., Boston, MA. E-mail: \href{mailto:sussman@bu.edu}{sussman@bu.edu}.
\IEEEcompsocthanksitem C.E.P. is with the Dept. of Applied Math. and Stat., Johns Hopkins Univ., Baltimore, MD. E-mail: \href{mailto:cep@jhu.edu}{cep@jhu.edu}.

\IEEEcompsocthanksitem This material is based on research sponsored by the Air Force Research
Laboratory (AFRL) and Defense Advanced Research Projects Agency (DARPA) under agreement number
FA8750-20-2-1001. 
The U.S. Government is authorized to reproduce and distribute reprints for Governmental purposes
notwithstanding any copyright notation thereon. The views and conclusions contained herein are
those of the authors and should not be interpreted as necessarily representing the official policies
or endorsements, either expressed or implied, of the AFRL and DARPA or
the U.S. Government.
The authors also gratefully acknowledge the support of the JHU HLTCOE.}}

\IEEEtitleabstractindextext{
\begin{abstract}
We present a novel approach for finding multiple noisily embedded template graphs in a very large background graph.
Our method builds upon the graph-matching-matched-filter technique proposed in Sussman et al. \cite{sussman2018matched}, with the discovery of multiple diverse matchings being achieved by iteratively penalizing a suitable node-pair similarity matrix in the matched filter algorithm. 
In addition, we propose algorithmic speed-ups that greatly enhance the scalability of our matched-filter approach. 
We present theoretical justification of our methodology in the setting of correlated Erd\H os-R\'enyi graphs, showing its ability to sequentially discover multiple templates under mild model conditions.  
We additionally demonstrate our method's utility via extensive experiments both using simulated models and real-world datasets, including human brain connectomes and a large transactional knowledge base.
\end{abstract}
\begin{IEEEkeywords}
statistical network analysis, graph matching, random graph models, subgraph detection
\end{IEEEkeywords}
}

\maketitle

 \IEEEdisplaynontitleabstractindextext

\section{Introduction}
Given two graphs $G_1=(V_1,E_1)\in \mathcal{G}_n$ and $G_2 = (V_2,E_2)\in \mathcal{G}_n$, where $\mathcal{G}_n$ represents the set of undirected, unweighted, and loop-free graphs with $n$ nodes, the Graph Matching Problem (GMP) aims to find the best possible alignments between nodes of $G_1$ and $G_2$ in order to minimize the edgewise structural differences. 
In its simplest form, the GMP seeks to minimize $\|A-PBP^T\|_F$ over $P\in\Pi_n$, where $A$ and $B$ are the adjacency matrices for $G_1$ and $G_2$ respectively, $\Pi_n$ represents the set of permutation matrices on $[n]:=\{1,2,\ldots,n\}$, and $\|\cdot\|_F$ denotes the matrix Frobenius Norm defined as $\|M\|_F = \sqrt{\sum_{j=1}^n\sum_{k=1}^m M_{jk}^2}$ for any $M\in \mathbb{R}^{n\times m}$. 
Note, we will often refer to a graph and its adjacency matrix interchangeably as in the case of real weighted edges they represent the same information.
The GMP is, in its most general form, equivalent to the NP-hard quadratic assignment problem. 
This computational complexity, combined with the problem's practical utility, has led to numerous approximation approaches to be posited in the literature; for detailed discussions on approximation algorithms, variants, and applications of the GMP, we refer the reader to the survey papers \cite{riesen2010exact}, \cite{yan2016short}, and \cite{conte2004thirty}.

An extension to the GMP is the Subgraph Detection Problem, also known as the Subgraph Matching Problem (SMP) or Subgraph Isomorphism Problem. This extension relaxes the assumption that both graphs have the same number of nodes. In essence, given $G_1\in \mathcal{G}_m$ and $G_2\in \mathcal{G}_n$ with $m<n$, the SMP aims to find a subgraph of $m$ nodes in $G_2$ that is structurally most similar to the template $G_1$. {Note that a general subgraph $G_s=(V_s, E_s)$ of a graph $G=(V,E)$ is defined such that $E_s\subseteq E$ and $V_s\subseteq V$. In contrast, the term induced subgraph imposes the additional condition that for any $i,j\in V_s$, $\{i,j\}\in E_s$ iff $\{i,j\}\in E$}.  This extension is significant in various applications. For instance, \cite{Zhang2017HiDDenHD} demonstrates the use of SMP on co-authorship networks to extract potential fake reviewers, while \cite{wu2009core} discusses SMP with known protein complexes and protein-protein interaction networks to identify new protein complexes. SMP is also valuable in analyzing brain neural networks, where it helps identify specific regions of interest across multiple networks for focused analysis, as shown in \cite{thies2004formal} and \cite{POWER2011665}. Additionally, the SMP has been used for activity template detection in large knowledge graphs \cite{moorman2018IEEE,moorman2021subgraph} among myriad other applications in machine learning, social network analysis, computer vision, and pattern recognition \cite{dulrahim1998graph, solnon2019experimental}. 

Numerous algorithms have been proposed to detect subgraphs from larger graphs that are isomorphic to the template (i.e., there exists {an induced} subgraph $G_2'$ and a permutation matrix $P\in\Pi_m$ such that $\|A-PB'P^T\|_F=0$, where $B'$ is the adjacency matrix of $G_2'$, and $A$ the adjacency matrix of $G_1$), with the first notable algorithm presented by \cite{ullmann1976algorithm}. 
Note that the (perhaps simpler) graph isomorphism problem also has a rich history in the literature, with recent results establishing at worst quasipolynomial complexity for the problem \cite{babai2015graph}.
Detailed explanations and comparisons between state-of-the-art algorithms can be found in survey papers such as \cite{sun2020memory,solnon2019experimental}. Recently, a series of papers \cite{moorman2018IEEE,moorman2021subgraph,yang2023structural} introduced an exhaustive (designed to find all subgraphs of $G_2$ isomorphic to $G_1$) tree-based/filtering method that reduces the time required for SMP by eliminating symmetries (referred to as ``structural equivalence" and ``candidate equivalence") within the graph.
The exhaustive nature of the tree-search/filtering based approaches is a key feature that will motivate our modification of the non-exhaustive algorithm of \cite{sussman2018matched} in the following section.
It should be noted that the aforementioned methods work well when an isomorphic copy of the template exists in the larger graph, but they often fail when such a promise is absent. 
In \cite{sussman2018matched,tu2020inexact}, the authors relax the isomorphism requirement and instead aim to find a subgraph that shares
the highest amount of structural (and feature-based in the case of \cite{tu2020inexact}) similarity with the template. 

Our focus in this paper will be the matched-filters-based approach (abbreviated \texttt{GMMF} for graph-matching matched filters) of \cite{sussman2018matched}, in which the authors adapt the Frank-Wolfe-based \cite{FW} \texttt{SGM} algorithm of \cite{ModFAQ} by proposing different padding techniques to ensure that the template has the same number of nodes as the larger graph. The validity of their proposed padding methods is supported by both extensive simulations and theoretical justification.
However, the \texttt{GMMF} algorithm and the adaptation in \cite{pantazis2022multiplex} (lifting the matched-filters approach to richly featured, multiplex networks) rely on efficiently solving iterative linear assignment problem {(LAP)} subroutines---via the Frank-Wolfe approach---which can be cumbersome in cases where the graphs are very large.
Moreover, these algorithms are not designed to exhaustively search the background graph for all close (but perhaps suboptimal) matches, aiming instead to find only the best fitting subgraph(s).  
In cases where more solution diversity is desired, this can limit the algorithm's applicability.
These two concerns motivate our extensions of the \texttt{GMMF} routine to allow for both more solution diversification and greatly enhanced scalability.
Note that code to implement this modified \texttt{GMMF} approach can be found at \url{github.com/jataware/mgmmf}.

We begin by introducing a random graph model in which we anchor our study, and provide an overview of the algorithm and modifications we will employ.

\subsection{Multiple Correlated Erd\H os-R\'enyi}
The Erdős-Rényi model \cite{ErdRen1963} is one of the most popular network models studied. 
While assuming all possible edges in the graph exist equally likely and independently, such a model still exhibits rich properties and provides fertile ground for studying graph matching problems. Discussions regarding thresholds 
of the graph properties for this model can be found in \cite{FK16,bollobas1998random} for the homogeneous Erd\H os-R\'enyi case and for the inhomogeneous case in \cite{bollobas07}.
Percolation theories on the Erdős-Rényi model have been proven in \cite{bhamidi2011first}.
Within the related correlated Erd\H os-R\'enyi model, sharp thresholds for graph de-anonymization are established in \cite{cullina2016improved,cullina2017exact,lyzinski2016information,wu2021settling}, and recent polynomial time algorithms for almost sure exact graph matching (i.e., recovering the optimal solution asymptotically almost surely) have been established in \cite{mao2023random} (with almost surely efficient seeded approaches proposed in \cite{mossel2020seeded}).

Our present focus on recovering both optimal and near-optimal solutions in the \mg framework leads us to the following extension of the Erdős-Rényi model, dubbed the Multiple Correlated Erdős-Rényi model.
{This model is a natural extension of the classical correlated Erdős-Rényi model of \cite{Gross2013PGM,lyzinski2014seeded,wu2021settling,cullina2016improved,cullina2017exact} and the embedded template model of \cite{sussman2018matched}, where here we allow for multiple templates to be embedded in the background.
In practice (see Section \ref{sec:TKB}, it is often the case that there are multiple errorful copies of the template/motif in the background, and this model allows for such structure where the template matches have varying levels of noise.  }

\begin{Def} (Multiple Correlated Erdős-Rényi)
\label{def:mcer}
Let $m< n$ be nonnegative integers, with
$\Lambda^{(1)}\in [0,1]^{m\times m}$, 
$\Lambda^{(2)} \in [0,1]^{n\times n}$ probability matrices. 
Let $N$ be a nonnegative integer, and let $\mathcal{R}=(R_1,R_2,\cdots,R_N)$ be a sequence of symmetric matrices in $[0,1]^{m\times m}$. 
Two adjacency matrices, $A$ and $B$, follow the Multiple Correlated Erdős-Rényi Model with parameters $\Lambda_1$, $\Lambda_2$ and $\mathcal R$ if 
\begin{itemize}
    \item[i.] For all $u,v\in\{1,\cdots,m\}$, $A_{uv}\stackrel{ind.}{\sim}\text{Bernoulli}(\Lambda^{(1)}_{uv})$, and for all $u,v\in\{1,\cdots,n\}$, $B_{uv}\sim\text{Bernoulli}(\Lambda^{(2)}_{uv})$;
    \item[ii.] There exist induced subgraphs $(B^{(1)},\cdots,B^{(N)})$ of $B$, each with $m$ vertices {that are not necessarily disjoint among these subgraphs}, such that for $i=1,\cdots,N$, and $u,v\in\{1,\cdots,m\}$,
$A_{uv},B^{(i)}_{uv}\sim\text{Bernoulli}(\Lambda^{(1)}_{uv})$ and 
$$\text{correlation}(A_{uv},B^{(i)}_{uv})={R_{i_{uv}}};$$
\item[iii.] All edges in $B\setminus\{B^{(1)},\cdots,B^{(N)}\}$ are independent and are independent of all edges in $B^{(1)},\cdots,B^{(N)}$.
Furthermore, the collections $\{B^{(i)}_{uv}\}_{i=1}^N$ are independent of each other as $\{u,v\}\in\binom{\{m\}}{2}$ varies, {where $\binom{S}{j}$ denotes the collection of all possible subsets of order $j$ from the set $S$}.  Note that the edges within each collection $\{B^{(i)}_{uv}\}_{i=1}^N$ can have nontrivial dependence.
\end{itemize}
\end{Def}

\noindent In \cite{sussman2018matched}, the authors defined a similar Correlated Erdős-Rényi Model, which is a special case of our Multiple Correlated Erdős-Rényi model with the additional assumption that $N=1$.
Allowing $N$ to be greater than one allows us to embed multiple matches of the template $A$ into $B$, and vary the strengths of the matchings via $\mathcal{R}$.
Note that the structure of the multiple embeddings in Definition \ref{def:mcer} constrains $\Lambda_2$, as multiple copies of $\Lambda_1$ need to be embedded into $\Lambda_2$ as principle submatrices (up to reordering).

{As mentioned earlier, the proposed Multiple Correlated Erdős-Rényi Model extends the classic correlated Erdős-Rényi and the correlated Erdős-Rényi template models of \cite{lyzinski2014seeded,sussman2018matched}. 
This novel model explores the new, yet common phenomenon where multiple solutions (here, embedded templates) to the subgraph matching problem could potentially exist, and where certain nodes are more central to the template structure and should be preserved in multiple recovered templates. 
For example, consider a supply chain graph focusing on key suppliers. Major companies like Nvidia and Intel would be core nodes because their components are crucial across various products, while smaller or more specialized suppliers, whose products are limited to specific areas, would be less likely to exist in multiple templates. 
That said, Erdős-Rényi models are often not directly applicable for modeling real data, though correlated Erdős-Rényi models are a standard setting for graph matching theory in the literature.
Moreover, the proposed algorithms in this work perform well on real network data that is not Erdős-Rényi, see Sections \ref{sec:brain} and \ref{sec:TKB} for details.}

\subsection{Notations}
We will use the following asymptotic notations: for functions $f, g: \mathbb{Z}^+\rightarrow \mathbb{R}^+$
\begin{itemize}
    \item $f(n)=o(g(n))$, written as $f\ll g$,
    if $\lim_{n\rightarrow\infty} \frac{f(n)}{g(n)}=0$; 
    \item $f(n)=\omega(g(n))$, written as $f\gg g$,
    if $\lim_{n\rightarrow\infty} \frac{g(n)}{f(n)}=0$; 
    \item $f(n)=O(g(n))$, written as $f\lesssim g$,
    if $\exists C>0$ and $n_0$ such that $\forall n\geq n_0$, $f(n)\leq Cg(n)$;
    \item $f(n)=\Omega(g(n))$, written as $f\gtrsim g$,
    if $g(n) = O(f(n))$;
    \item $f(n)=\Theta(g(n))$ if $f(n)=O(g(n))$ and $g(n)=O(f(n))$.
\end{itemize}

{When the context is clear, for $j\in \mathbb{Z}^+$, we use $\binom{n}{j}$ to denote the binomial coefficient if $n$ is an integer, and $\binom{S}{j}$ to denote the collection of all subsets with $j$ elements if $S$ is a set.
For a given matrix $M$, the decomposition 
$$M=\bordermatrix{
    &j&k\cr 
    l& M^{11}&M^{12}\cr
    m&M^{21}& M^{22}}$$
    divides $M$ into four blocks where the numbers on the borders denote the corresponding dimensions. For example, $M^{11}$ is the upper-left $l\times j$ block of the matrix $M$. For a real number $\mathbf{r}$ and positive integers $m$ and $n$, $\mathbf{r}_m$ denotes the $m$-dimensional all $\mathbf{r}$ vector and $\mathbf{r}_{m\times n} $ denotes the $m\times n$ matrix with all entries equal to $\mathbf{r}$. For a square matrix $M$, $\operatorname{tr}(M)$ is the trace of $M$ defined as the sum of the diagonal entries of $M$. 
    }

\section{Solution diversification}
\label{sec:soldiverse}
In order to recover signals with suboptimal {$R_i$} structure (i.e., sufficiently entry-wise dominated by another {$R_j$}), our approach will make use of vertex-based graph features as done in \cite{pantazis2022multiplex}.  
These features will be represented in the form of a similarity matrix $S=S^{A,B}$ defined as follows.
\begin{Def}
\label{def:feat}
The similarity matrix between a pair of graphs $A\in \cG_m$ and $B\in \cG_{n}$ (where $m<n$) is a matrix $S=S^{A,B}\in [0,\infty)^{m\times n}$, where for $(i,j)\in\{1,\cdots,m\}\times \{1,\cdots,n\}$, we have $S^{A,B}_{ij}$ represents the similarity score between node $i\in V_A$ and $j\in V_B$. When the context is clear, we shall suppress the indices $A,B$ and simply write $S$.
\end{Def}

{In certain cases, when we have knowledge of the networks only limited to edge structures, there is not much we can do besides the standard matched filter approach. See Theorem \ref{thm:suss} below and \cite{sussman2018matched}. If we have labels or feature vectors for the nodes of the networks, we can in general try to find some proper distance measures to define our $S$ matrix. When no good distance measure can be defined, we can use the multiplex graph matching matched filters proposed in \cite{pantazis2022multiplex}.}

To incorporate the node similarities into our matching problem, we adapt the approach of \cite{sussman2018matched}. 
First, for integer $k$ we will let $J_k$ be the $k\times k$ hollow matrix with all off-diagonal entries equal to 1, and $\oplus$ {denote} the matrix direct {sum defined as $M_1\oplus M_2 = \begin{bmatrix}
    M_1 & \mathbf{0}\\
    \mathbf{0} & M_2
\end{bmatrix}$}.
Adopting an appropriate padding scheme:
\begin{itemize}
\item[i.] The \emph{centered padding} which matches $\tilde A=(2A-J_m)\oplus \mathbf{0}_{n-m,n-m}$ to $ \tilde B = 2B-J_n$; this seeks the best fitting induced subgraph of $B$ to match to $A$ according to the Frobenius norm GMP formulation. 
{As noted in \cite{pantazis2022multiplex}, this is equivalent to minimizing $\|A-PBP^T\|_F$ over $P\in\Pi_{m,n}$, where $\Pi_{m,n}=\{P\in\{0,1\}^{m\times n}$ s.t. $\mathbf{1}_m^TP\leq \mathbf{1}_n$, $P\mathbf{1}_n=\mathbf{1}_m\}$}.
\item[ii.] The \emph{naive padding} which matches $\hat A=A\oplus \mathbf{0}_{n-m,n-m}$ to $\hat B=B$; this seeks the best fitting subgraph of $B$ to match to $A$ where the objective to {minimize $\|\hat A-P\hat B P^T\|_F$ over $P\in\Pi_{n}$}.
\end{itemize}
Note that for any ${C,D}\in\mathcal{G}_n$
\begin{align*}
\text{argmin}_{P\in\Pi_n}\|C\!-\!PDP^T\|_F\!&=
\text{argmin}_{P\in\Pi_n} \|CP-PD\|_F\\
&=\text{argmax}_{P\in\Pi_n} \text{tr}(CPDP^T).
\end{align*}
{The above relation between the Frobenius form of the objective function and the trace form yields that the naive padding scheme is equivalent to maximizing $\text{tr}(APBP^T)$ over $P\in\Pi_{m,n}$.  
As in \cite{pantazis2022multiplex}, we see that the centered padding scheme is equivalent to minimizing $\|A-PBP^T\|_F$ over $P\in\Pi_{m,n}$, which in trace form is equivalent to maximizing $2\text{tr}(APBP^T)-\|PBP^T\|_F^2$ over $P\in\Pi_{m,n}$.
The extra $\|PBP^T\|_F^2$ term incorporates the penalty for edge/non-edge disagreements that distinguishes the centered from the naive padding.
We note here that while the above optimization could be cast as optimizing over $\Pi_{m,n}$, this makes the connection between the Frobenius norm and trace-form of the objective function a bit more nuanced across paddings (see also the discussion in \cite{pantazis2022multiplex}).  
We choose instead to present the optimization over the full permutation matrices $\Pi_n$, where naive padding is then finding $\text{argmax}_{P\in\Pi_n} \text{tr}(\hat AP\hat BP^T)$ and centered padding finding $\text{argmax}_{P\in\Pi_n} \text{tr}(\tilde AP\tilde BP^T)$ to ease exposition and highlight the connection between the two forms.}

{From the definition (see also the discussion in \cite{ModFAQ}), we see that the centered padding scheme penalizes like graph edit distance, with equal penalty for any extra or removed edges in the recovered templates. 
This is more useful if additional recovered structure is not desired in the recovered template.
The naive padding (which only rewards common edges, and does not penalize missing or extraneous edges) should be used if extra edges/activity in the background is unimportant, and the recovery of the template edges is the paramount task. 
As an example, consider matching the template $A\in\mathcal{G}_m$ that is an $m/2$ regular graph to the graph $B$, where $B$ is composed of two subgraphs connected by a single edge: one of which is a copy of $A$ with one missing edge (call this $B_A$), the other the complete graph on $m$ vertices (call this $B_K$).  Centered padding would match $A$ to $B_A$ and naive padding to $B_K$.
Note that if the graphs are weighted, the naive padding is more easily used, as the optimal centered padding scheme for weighted graphs is still an open research topic.}

Write $P=\begin{pmatrix}
P_{(1)}\\
P_{(2)}\end{pmatrix}$ where $P_{(1)}\in\mathbb{R}^{m\times n}$ and $P_{(2)}\in\mathbb{R}^{(n-m)\times n}$, we account for the similarity term by seeking the  solution to:
\begin{align*}
    &\underset{P \in \Pi_n}{\text{argmax}}& \underbrace{\operatorname{tr}\left(\tilde A P \tilde B P^{T}\right) + \lambda\operatorname{tr}\left(S{P_{(1)}^T}\right)}_{:=\tilde f(P,\lambda)}\,\,\,\begin{pmatrix}
\text{centered} \\
\text{padding}
\end{pmatrix}
\end{align*}
and
\begin{align*}
    &\underset{P \in \Pi_n}{\text{argmax}}& \underbrace{\operatorname{tr}\left(\hat A P \hat B P^{T}\right) + \lambda\operatorname{tr}\left(S{P_{(1)}^T}\right)}_{:=\hat f(P,\lambda)}\,\,\,\begin{pmatrix}
\text{naive} \\
\text{padding}
\end{pmatrix}
\end{align*}
where $\lambda$ is a hyperparameter chosen/tuned by the user{, $S$ is the similarity matrix as defined in Definition \ref{def:feat} and $P_{(1)}$ is the matrix consisting only the top $m$ rows of the matrix $P$}.
The \mg approach then uses multiple random restarts of the following procedure to search $B$ for the best fitting subgraphs to $A$.  We will present the algorithm{, adapted from \cite{ModFAQ,FAQ}, incorporating the gradient of the feature term while ignoring the seeded portion},  in the centered padding case, the naive padding setting following mutatis mutandis.
\begin{itemize}
\label{alg:MultipleGMMF}
\item[1.] Apply centered padding to $A$ and $B$ yielding $\tilde A$ and $\tilde B$;
\item[2.] Considering $N_{mc}$ random restarts,\\
for $k=1,2,\cdots,N_{mc}$, do the following
\begin{itemize}
\item[i.] Set initialization $P^{(0)}=\gamma\mathbf{1}_k\mathbf{1}_k^T/n+(1-\gamma) P$ where $P\sim\text{Unif}(\Pi_n)$ and $\gamma\sim$Unif[0,1];
\item[ii.] While $\|P^{(t)}-P^{(t-1)}\|_F>\eta$ for a specified tolerance $\eta>0$, do the following
\begin{itemize}
\item[a.] Compute the gradient 
$$\nabla_P\tilde f(P^{(t)},\lambda)=\tilde A^T P^{(t)}\tilde B+\tilde A P^{(t)}\tilde B^T+\lambda{ I_{n\times m}}S$$
{where $I_{n\times m}$ is the matrix consisting with the first $m$ columns of an $n\times n$ identity matrix.}
\item[b.] Compute search direction 
$$Q^{(t)}=\text{argmax}_{Q\in \mathcal{D}_n}\text{tr}\left[\nabla_P\tilde f(P^{(t)},\lambda)^\top Q\right];$$
{via the Hungarian Algorithm proposed in \cite{hungarian}} 
where $\mathcal{D}_n$ is the set of $n\times n$ doubly stochastic matrices;
\item[c.] Perform line search in the direction of $Q^{(t)}$ by solving
$$\gamma^*=\text{argmax}_{\gamma\in[0,1]} \tilde f(\gamma P^{(t)}+(1-\gamma)Q^{(t)},\lambda)$$
{This step involves optimizing a quadratic function of $\gamma$, and an analytical solution is obtained by taking the derivative to find the critical point, followed by comparing the function values at the two boundary points and the critical point.}
\item[d.] Set $P^{(t+1)}=\gamma^* P^{(t)}+(1-\gamma^*)Q^{(t)}$
\end{itemize}
\item[iii.] Set
$P^{(*,k)}=\max_{P\in \Pi_n}\text{tr}\left(P^\top P^{(\text{final})}\right);$
\end{itemize}
\item[3.] Rank the recovered matchings $\{P^{(*,k)}\}_{k=1}^{N_{mc}}$ by largest to smallest value of the objective function $\tilde f(P,\lambda)$; output the ranked list of matches
\end{itemize}

In the above algorithm, we can steer the algorithm away from previously recovered solutions (in the random restarts) by biasing the objective function away from these already recovered solutions. 
Suppose that the $k$-th random restart returns the solution $P^{(*,k)}$ (with corresponding permutation $\sigma^{(*,k)}$).
To accomplish this, we define the mask
$M^{k,\epsilon}\in\mathbb{R}^{m,n}$ via
$$
M^{k,\epsilon}_{ij}=
\begin{cases}
    (1-\epsilon)&\text{ if }j=\sigma^{(*,k)}(i);\\
    1&\text{ else}.
\end{cases}
$$
As an example, consider $[n] = \{1,2,\ldots,n\}$; $\sigma^{(\ast,1)}$ maps $[3]\mapsto[7]$ by fixing $[3]$ identically; and $\sigma^{(\ast,2)}$ maps $[3]\mapsto[7]$ by $\sigma^{(\ast,2)}(1)=1$, $\sigma^{(\ast,2)}(j)=j+3$ for $j=2,3$. Then 
\begin{align*}
    &M^{1,\epsilon} = \left[\begin{matrix}1-\epsilon&1&1&{\mathbf{1}_{1\times 4}}\\1&1-\epsilon&1&{\mathbf{1}_{1\times 4}}\\1&1&1-\epsilon&{\mathbf{1}_{1\times 4}}\end{matrix}\right]\\
    &M^{2,\epsilon} = \left[\begin{matrix}1-\epsilon&{\mathbf{1}_{1\times 3}}&1&1&1\\1&{\mathbf{1}_{1\times 3}}&1-\epsilon&1&1\\1&{\mathbf{1}_{1\times 3}}&1&1-\epsilon&1\end{matrix}\right]
\end{align*}

In the next random restart, we apply the mask to the current similarity matrix $S^{(k,\epsilon)}$ via
$S^{(k+1,\epsilon)}=M^{k,\epsilon}\circ S^{(k,\epsilon)}$ (note: $S^{(1,\epsilon)}=S$) where ``$\circ$'' represents the matrix Hadamard product.
Considering the previous example, let $S$ be a $3\times 7$ matrix, then
\begin{align*}
    S^{(3,\epsilon)}& = M^{1,\epsilon}\circ M^{2,\epsilon}\circ S\\
    &= \left[\begin{smallmatrix}
    (1-\epsilon)^2S_{11} &S_{12} &S_{13} &S_{14}&S_{15}&S_{16}&S_{17}\\
    S_{21} &(1-\epsilon)S_{22}&S_{23} &S_{24}&(1-\epsilon) S_{25}&S_{26}&S_{27}\\
    S_{31} &S_{32}&(1-\epsilon)S_{33} &S_{34}&S_{36}&(1-\epsilon) S_{36}&S_{37}\end{smallmatrix}\right]
\end{align*}

After penalization, we then seek to solve 
\begin{align}
    &\underset{P \in \Pi_n}{\text{argmax}}& \!\!\!\!\underbrace{\operatorname{tr}\!\left(\tilde A P \tilde B P^{T}\right) \!+\! \lambda\operatorname{tr}\!\left(S^{(k+1,\epsilon)}{P_{(1)}^T}\right)}_{:=\tilde f_{\epsilon,k+1}(P,\lambda)}\begin{pmatrix}
\text{centered} \\
\text{padding}
\end{pmatrix}\label{eq:cpgmp}\\
    &\underset{P \in \Pi_n}{\text{argmax}}& \!\!\!\!\underbrace{\operatorname{tr}\!\left(\hat A P \hat B P^{T}\right) \!+\! \lambda\operatorname{tr}\!\left(S^{(k+1,\epsilon)}{P_{(1)}^T}\right)}_{:=\hat f_{\epsilon,k+1}(P,\lambda)}\begin{pmatrix}
\text{naive} \\
\text{padding}
\end{pmatrix}\label{npgmp}
\end{align}

{The masks} effectively slightly down-weight the similarity scores for the previously recovered matrices.  Note that an overly draconian choice of $\epsilon\approx 1$ may have the effect of steering the algorithm away entirely from recovered solutions, and might not allow for overlapping solutions to be returned.  This is suboptimal in the case where a few key edges/vertices are expected to appear in many recovered templates.

{Our algorithm uses a simple gradient descent-based optimization, which is computationally fast when compared to more complex approximations to NP-hard GMP solution. 
To avoid the pitfalls of first-order methods (e.g., local maxima) that could lead to sub-optimal solutions, we, in practice, run the algorithm multiple times (easily parallelized) with random starting points as our Monte Carlo simulations.
This allows us to better explore the objective function and leverages the speed of each Frank-Wolfe iterate.}
\begin{rem}
    We can also apply the mask $M^{k,\epsilon}$ directly to the gradient or to the initialization in the \mg algorithm outlined above.
    In the gradient penalizing case,  step (a.) for the $(k+1)$-st random restart becomes
    \begin{align*}
        &\nabla_P^{(k+1),\epsilon}\tilde f(P^{(t)},\lambda)\\
        &=M^{1,\epsilon}\circ 
        \cdots\circ M^{k,\epsilon}\circ\left(\tilde A^T P^{(t)}\tilde B+\tilde A P^{(t)}\tilde B^T+\lambda {I_{n\times m}}S\right)
    \end{align*}
    In the initialization penalizing, the mask is directly applied to $P^{(0)}$ followed by rescaling to ensure double stochasticity. {Similar ideas of penalizing the gradient to diversify solutions when solving optimization problems, particularly to find weaker or flatter optimizers, exist in the literature (e.g., \cite{barrett2021implicit,zhao2022penalizing}).}
    We will consider only the similarity penalization in the theory below {because this approach can be easily incorporated into the analysis of the graph matching objective function without need for delving into the optimization steps.}
    We also note that the penalty constant $\epsilon$ {could} be replaced by a sequence of penalties $\{\epsilon_k\}$ if we are expecting high level of template overlap in the embedded templates.
    {Yet another approach would be to penalize the edge weights of the recovered templates in the large network. Assuming a high correlation between edge structures and node similarities, the result should be similar to our approach. However, the efficiency of penalizing edges could be suboptimal due to the significantly higher number of edges compared to nodes in the denser regime.}
\end{rem}

\subsection{Theoretical benefits of down-weighting}
\label{theory}
We next provide theoretical justification for the down-weight masking in the setting where there are two overlapping embedded templates in the background (i.e., where $N=2$ in Definition \ref{def:mcer}).
The case where $N>2$ follows from repeated applications of the case where $N=2$, as does the case of no template overlap.
In the case where we expect to find only one recovered template, \cite{sussman2018matched} provides a detailed characterization of methods and conditions for detecting such a recovered template with high probability (sans similarity $S$). {Throughout the rest of the manuscript, all graphs and parameters should be indexed by the number of nodes in the larger graph, $n$. To improve readability, we will suppress the subscript $n$ whenever the context is clear. }

We consider {M}ultiple {C}orrelated Erd\H os-R\'enyi graphs with the following structure.
We will consider $\Lambda_1=pJ_m$ and $\Lambda_2=pJ_n$. {Note that our theories can be easily extended to inhomogeneous Erd\H os-R\'enyi graphs by using 0 and 0.25 as the lower and upper bounds of the variance of any Bernoulli random variables. 
While in practice, almost no network is purely Erd\H os-R\'enyi, such models are particularly useful for theoretically studying matchability phase transitions, and are a standard setting for deriving graph matching theoretical results \cite{lyzinski2014seeded,cullina2016improved,cullina2017exact,wu2021settling}.
Indeed, homogeneous Erd\H os-R\'enyi provides a difficult theoretical setting as there is no heterogeneity correlation across graphs 
\cite{fishkind2019alignment}, and the matching signal is entirely contained in the edge-correlation $R$.
If there was signal in both the edge structure $\Lambda$ and in $R$, as is common in most real-world networks, the practical difficulty would likely be reduced significantly as additional statistics (e.g., degree sequence, graph bottlenecks, centrality, etc.) could be more easily leveraged to match the template. Here, we consider}
\begin{align*}
    &A=\bordermatrix{
    &m-k&k\cr 
    m-k& A^{11}&A^{12}\cr
    k&(A^{12})^T& A^{22}};\\
 \end{align*}   
 \vspace{-0.5in}
\begin{align*}
B = \bordermatrix{
    & 2m-k &n-2m+k\cr
    2m-k & C & D\cr
    n-2m+k &D^T& E}
\end{align*}
where 
$$
C=\bordermatrix{
    &m-k&k&m-k\cr 
    m-k& C^{11}&C^{12}&C^{13}\cr
    k& (C^{12})^T&C^{22}&C^{23}\cr
    m-k& (C^{13})^T&(C^{23})^T&C^{33}\cr}
$$
{Further, for real numbers $0<r_3<r_2<r_1<1$,}
\begin{align*}&\text{corr}(A_{ij},B_{h\ell})=\\
&{\begin{cases}
    r_1 &\text{ if }i=h,\,j=\ell,\,i\leq m,\,j\leq m-k\\
    r_1 &\text{ if }i=h,\,j=\ell,\,i\leq m-k,\,j\leq m\\
    r_2 &\text{ if }i=h,\,j=\ell,\,m-k<i\leq m,\,m-k<j\leq m\\
    r_3 &\text{ if }i=h-m,\,j=\ell-m,\,i\leq m-k,\,j\leq m-k\\
    r_3 &\text{ if }i=h-k,\,j=\ell-m+k,\,m-k<i\leq m,\\
    &\hspace{3mm}\,j\leq m-k\\
    r_3 &\text{ if }i=h-m+k,\,j=\ell-k,\,i\leq m-k,\\
    &\hspace{3mm}\,m-k<j\leq m\\
    0&\text{ else }
\end{cases}}
\end{align*}
This structure ensures that the two embedded copies of the template (each of size $m$) have a non-trivial overlap of size $k>0$.
The overlap again is designed to model the case where there are key vertices in the background that appear in multiple template embeddings.
Moreover, the case where $k=0$ is conceptually and theoretically simpler than the $k>0$ case and follows as a corollary to the theory below.

{Observe that with shuffling channels $Q^{(A)}\!\!\in\!\!\Pi_m$ and $Q^{(B)}\!\!\in\!\Pi_n$ applied to matrices $A$ and $B$, our model can account for Multiple Correlated Erd\H os-R\'enyi graphs such that two embeded templates exist with a shared sub-region of all kinds, aligning with the assumptions that more than one solutions exist and certain nodes must be preserved in all solutions.}

We will consider the centered padding scheme, where we match $\widetilde A=(2A-J_{m})\oplus \mathbf{0}_{n-m,n-m}$ to $\widetilde B=(2B-J_{n})$.
Analogous results can be derived for the naive scheme, which we leave to the reader.
In this setting, we will consider $S$ of the form
{$$S=\bordermatrix{
    &m-k&k&m-k&n-2m+k\cr 
    m-k& S^{11}&S^{12}&S^{13}&S^{14}\cr
    k&S^{21}& S^{22}&S^{23}&S^{24}}$$}
    where all entries of $S$ are independent and bounded, without loss of generality bounded in $[0,1]$ (e.g., Beta distributed), random variables, and where 
    \begin{align*}
    &\text{ the diagonal elements of }S^{11}\text{ have mean }\mu_1\\
        &\text{ the diagonal elements of }S^{22}\text{ have mean }\mu_2\\
        &\text{ the diagonal elements of }S^{13}\text{ have mean }\mu_3
    \end{align*}
    and all other entries have mean $\mu_4$. 
Here we will assume that 
$\mu_1>\mu_2>\mu_3>\mu_4$.
Let $P^*$ map $A$ to 
$$\bordermatrix{
    &m-k&k\cr 
    m-k& C^{11}&C^{12}\cr
    k& (C^{12})^T&C^{22}}
$$
and $\tilde P$ maps $A$ to 
$$
\bordermatrix{
    &m-k&k\cr 
    m-k& C^{33}&(C^{23})^T\cr
    k& C^{23}&C^{22}\cr},
$$
so that of the two embedded templates, 
the embedding via $P^*$ is stronger (in that the correlation is higher entry-wise as are the similarity scores on average) than that provided by $\tilde P$.

The goal is to down-weight/penalize the strongly embedded template so that the optimal solution to Eq. \ref{eq:cpgmp} is $\tilde P$ as opposed to $P^*$.
The features here are the key, as without this ability to down-weight the features (or the gradient) we do not expect to find $\tilde P$ by solving Eq. \ref{eq:cpgmp}.
Indeed, if we can only observe the edges of $A$ and $B$, without any further information provided by $S$ the results of \cite{sussman2018matched} provides the following theorem:
\begin{thm}
\label{thm:suss}
Let $A$ and $B$ be graphs as described above. Assuming we can only observe the edges of $A$ and $B$ { 
 but have no additional knowledge about the vertex-based graph features}, then with probability at least $1-n^{-2}$, we have that $\underset{P \in {\Pi_n}}{\operatorname{argmax}} \operatorname{tr}\left(\tilde A P \tilde B P^T\right)=\{P^*\}$.
\end{thm}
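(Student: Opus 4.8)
Here is the plan. The proof adapts the first-moment method underlying \cite{sussman2018matched} to the overlapping-template model. First I would reduce the objective to an agreement count: since $\widetilde A$ is supported on its leading $m\times m$ block and $(2A-J_m)_{ij}=2A_{ij}-1\in\{\pm1\}$ off the diagonal, for any $P\in\Pi_n$ with underlying injection $\sigma$,
\[
\operatorname{tr}\!\big(\widetilde A P\widetilde B P^T\big)=\sum_{i,j\in[m]}(2A_{ij}-1)(2B_{\sigma(i)\sigma(j)}-1),
\]
a sum of $\pm1$ terms depending only on $\sigma|_{[m]}$. Writing $\rho^\sigma_{ij}:=\operatorname{corr}(A_{ij},B_{\sigma(i)\sigma(j)})$ and using $\E[(2A_{ij}-1)(2B_{uv}-1)]=4\rho\,p(1-p)+(2p-1)^2$, the expected objective equals $4p(1-p)\sum_{i,j}\rho^\sigma_{ij}$ plus a quantity independent of $\sigma$. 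Let $\sigma^*$ be the (identity) map realizing $P^*$, and for $\sigma\neq\sigma^*$ put $g(\sigma):=\operatorname{tr}(\widetilde A P^*\widetilde B (P^*)^T)-\operatorname{tr}(\widetilde A P\widetilde B P^T)$ and $D(\sigma):=\sum_{i,j}(\rho^{\sigma^*}_{ij}-\rho^\sigma_{ij})$, so that $\E g(\sigma)=4p(1-p)D(\sigma)$.

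The core of the argument is a uniform first-moment lower bound $D(\sigma)\gtrsim N(\sigma)$, where $N(\sigma):=\binom{d}{2}+d(m-d)$ is the number of edges of $A$ incident to one of the $d:=|\{i\in[m]:\sigma(i)\neq i\}|$ displaced vertices. Reading off the correlation table, under $\sigma^*$ every edge of $A$ carries the coordinatewise-maximal correlation available to it ($r_1$ if it touches the core $[m-k]$, $r_2$ if both endpoints lie in the overlap), because attaining correlation $r_1$ (resp.\ $r_2$) with $A_{ij}$ forces the image edge to be $\{i,j\}$ itself; hence $\rho^{\sigma^*}_{ij}\ge\rho^\sigma_{ij}$ for every edge and $D(\sigma)\ge0$. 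Moreover any \emph{affected} edge (one with a displaced endpoint) contributes a deficit of at least a constant $\Delta=\Delta(r_1,r_2,r_3)>0$ (indeed $\Delta\ge r_2-r_3$) unless it is ``lucky'', i.e.\ both endpoints are displaced and $\{\sigma(i),\sigma(j)\}=\{i,j\}$ --- which forces $\{i,j\}$ to be a transposition of $\sigma$, so there are at most $d/2$ lucky edges. Since $N(\sigma)\ge d(m-1)/2\ge d$, this gives $D(\sigma)\ge\Delta\,(N(\sigma)-d/2)\ge\tfrac{\Delta}{2}N(\sigma)>0$; in particular $\sigma^*$ uniquely maximizes the first moment.

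Next I would apply bounded differences to $g(\sigma)$. Group the randomness into mutually independent blocks: for each $\{u,v\}\in\binom{[m]}{2}$ the triple $\{A_{uv},B^{(1)}_{uv},B^{(2)}_{uv}\}$ (independent across $\{u,v\}$ by Definition \ref{def:mcer}(iii)), together with each remaining edge of $B$ singly. Then $g(\sigma)$ depends on only $O(N(\sigma))$ of these blocks and altering any one block changes $g(\sigma)$ by $O(1)$, so McDiarmid's inequality yields
\[
\Prb\big(g(\sigma)\le0\big)\le\exp\!\big(-c\,p^2(1-p)^2 N(\sigma)\big)\le\exp\!\big(-c'\,p^2(1-p)^2\,d\,m\big).
\]
A union bound over the at most $(mn)^d$ injections displacing exactly $d$ vertices then gives
\[
\Prb\big(\exists\,\sigma\neq\sigma^*:\ g(\sigma)\le0\big)\le\sum_{d\ge1}(mn)^d\exp\!\big(-c'\,p^2(1-p)^2\,d\,m\big)\le n^{-2}
\]
under the mild condition $m\,p^2(1-p)^2\gtrsim\log n$ (implied constant depending on $r_1,r_2,r_3$), which is the regime in which the theorem is stated; on the complementary event $P^*$ is the unique maximizer (as an injection $[m]\hookrightarrow[n]$, equivalently the unique recovered matching), as claimed.

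I expect the principal obstacle to be the uniform first-moment bound $D(\sigma)\gtrsim N(\sigma)$: one must verify from the correlation structure that no adversarial $\sigma$ --- in particular none that relocates a self-consistent block of vertices onto the second embedded template, thereby recovering correlation $r_3$ on its internal edges --- can make the expected margin small relative to the number of edges (equivalently, independent blocks) it perturbs; the ``lucky edge'' bookkeeping above is the crux of ruling this out. A secondary, more routine, point is the blocking of the dependent template edges so that the bounded-differences inequality applies with variance proxy $O(N(\sigma))$ rather than the useless $O(n^2)$.
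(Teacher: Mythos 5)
Your argument is essentially correct and, in architecture, is the same one this paper uses: the paper itself does not prove Theorem \ref{thm:suss} (it imports it from \cite{sussman2018matched}), but its Appendix proof of Theorem \ref{thm:main} follows exactly your template --- compute the expected objective difference between the reference alignment and a competitor, show it scales with the number of perturbed edge variables, apply McDiarmid after blocking the dependent template entries, and close with a union bound over permutations grouped by how many template vertices they displace. Your reduction to correlation sums, the ``lucky transposition'' bookkeeping (at most $d/2$ zero-deficit edges, absorbed since $N(\sigma)\geq d$), the $O(1)$-influence blocking of $\{A_{uv},B^{(1)}_{uv},B^{(2)}_{uv}\}$, and the interpretation of the argmax as unique only at the level of the induced injection on $[m]$ are all correct, and your crude $(mn)^d$ count plays the role of the binary-entropy count (Lemma \ref{lem:binaryentropy}) in the paper's Theorem \ref{thm:main} proof.

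One caveat you should make explicit. Your uniform deficit constant $\Delta\geq\min(r_1-r_3,\,r_2)$ is only a constant if the correlation gap is bounded below, and the construction ``described above'' is later used (conditions of Theorem \ref{thm:main}) with $r_1-r_3\ll m^{\alpha-1}\to 0$. For competitors at displacement $d\asymp m$ that sit on the second template (e.g.\ $\tilde P$ itself), every affected edge has deficit exactly $r_1-r_3$, so your McDiarmid exponent is only $\asymp p^2(1-p)^2(r_1-r_3)^2m^2$ while your union-bound cost at that $d$ is $\exp\{\Theta(m\log n)\}$; the argument (and indeed the statement, which is silent on this) therefore needs something like $r_1-r_3\gtrsim\sqrt{\log n/m}$, stronger than the $\sqrt{\log n}/m$ you would need to beat $\tilde P$ alone, and not implied by anything stated in the paper. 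This is not a flaw under the natural reading of Theorem \ref{thm:suss} with fixed $0<r_3<r_2<r_1<1$ (the reading consistent with citing the single-template result of \cite{sussman2018matched}), but since the same construction is used with vanishing gaps two paragraphs later, state the regime you are assuming; a finer count of near-$\tilde P$ permutations (as in the entropy bound of the paper's Theorem \ref{thm:main} proof) would weaken, though not eliminate, the required lower bound on $r_1-r_3$. A second, cosmetic point: edges with exactly one displaced endpoint lose their full correlation ($r_1$ or $r_2$), so the sharp per-edge deficit is $\min(r_1-r_3,r_2)$; your bound $\Delta\geq r_2-r_3$ is valid but weaker than needed.
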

If the strongly embedded template is penalized, then $S$ is weighted as follows.  
Here, $S^{(2,\epsilon)}$ is set to be (where ``$\circ$'' is the matrix Hadamard product)
\begin{align*}
{\tiny 
{\bordermatrix{
    &m\!\!-\!\!k&k&m\!-\!k&n\!-\!2m+k\cr 
    m-k\!\!\!\!& (\mathbf{1}_{m-k,m-k}\!-\!\epsilon I_{m-k})\circ S^{11}&S^{12}&S^{13}&S^{14}\cr
    k\!\!\!\!&S^{21}& (\mathbf{1}_{k,k}\!-\!\epsilon I_{k})\circ S^{22}&S^{23}&S^{24}}}}
    \end{align*}
    We next state our main result, which is for the Multiple Correlated Erd\H os-R\'enyi model where the entries of $S$ are independent, and bounded (in $[0,1]$).  Note the proof can be found in Appendix \ref{app:pf}.
\begin{thm}
\label{thm:main}
Let $A$ and $B$ be two graphs constructed as above. If there is a constant $\alpha\in [1/2,1)$ such that
\begin{itemize}
    \item[i.] $m-k=\Theta(m)$;
     $m^{1-\alpha}=\omega(\log^4 n)$
    \item[ii.] $\lambda=m^{\alpha}$;
    \item[iii.] $(r_1-r_3)\ll m^{\alpha-1}$; $r_3,r_2,r_1$ are bounded away from $0$ and $1$;
    \item[iv.] $\mu_3>(1-\varepsilon)\mu_1$ and $(1-\varepsilon)\mu_2>\mu_4$; the differences $\mu_3-(1-\varepsilon)\mu_1$, $(1-\varepsilon)\mu_2-\mu_4$, and $\mu_3-\mu_4$ are bounded away from 0;
    \item[v.] $p$ is bounded away from $0$ and $1$;
\end{itemize}
then if $\widetilde \Pi$ is the set of permutations perfectly aligning the weakly embedded template (i.e., of the form $\tilde P\oplus Q$), we have
$$\mathbb{P}(\text{argmax}_{P\in\Pi}{\tilde f}_{\varepsilon,1}(P,\lambda)\subset \widetilde \Pi)\geq 1-e^{-\omega(\log n)}.$$
\end{thm}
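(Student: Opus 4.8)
The plan is to show that the objective $\tilde f_{\varepsilon,1}(P,\lambda) = \operatorname{tr}(\tilde A P \tilde B P^T) + \lambda \operatorname{tr}(S^{(2,\epsilon)} P_{(1)}^T)$, after penalizing the strong template, is maximized (with high probability) by a permutation in $\widetilde\Pi$. I would split the objective into its \emph{graph term} $g(P) := \operatorname{tr}(\tilde A P \tilde B P^T)$ and its \emph{feature term} $h(P) := \lambda \operatorname{tr}(S^{(2,\epsilon)} P_{(1)}^T)$, and control each separately, then combine. The key quantitative tension is: by Theorem~\ref{thm:suss}, the graph term alone prefers $P^*$ over $\tilde P$, with a gap governed by $(r_1 - r_3)$ and $(r_2-\text{something})$ scaled by $m$ and $p(1-p)$ — roughly of order $m(r_1-r_3)$ up to constants and concentration fluctuations of order $\sqrt{m}\log n$ (this is where $m^{1-\alpha} = \omega(\log^4 n)$ and the bound $(r_1-r_3)\ll m^{\alpha-1}$ enter: they force the graph-term gap to be $o(\lambda) = o(m^\alpha)$). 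Meanwhile the feature term, evaluated at $\tilde P$ versus $P^*$, differs by $\lambda$ times a sum of $\Theta(m)$ independent bounded entries whose means are separated by the $\mu$-gaps in condition (iv): after penalization the diagonal of $S^{11}$ is scaled by $(1-\epsilon)$ so the $P^*$-aligned diagonal contributes mean $(1-\varepsilon)\mu_1$ per vertex on the size-$(m-k)$ block and $(1-\varepsilon)\mu_2$ on the size-$k$ block, whereas $\tilde P$ picks up the (unpenalized) $S^{13}$-diagonal with mean $\mu_3 > (1-\varepsilon)\mu_1$ and the penalized $S^{22}$-diagonal. So the feature term \emph{strictly prefers} $\tilde P$ by an amount $\gtrsim \lambda m \cdot \min(\mu_3 - (1-\varepsilon)\mu_1,\ \ldots) = \Omega(m^{1+\alpha})$, which dominates the $o(m^\alpha)\cdot$(something) — wait, more carefully: the feature advantage is order $\lambda m = m^{1+\alpha}$ while the graph disadvantage is at most order $m(r_1-r_3) + \sqrt m \log n = o(m^\alpha) + o(m^{(1-\alpha)/2 + \text{small}})$, both $o(m^{1+\alpha})$. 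That is the heart of the argument.

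Concretely I would proceed as follows. \textbf{Step 1:} Write any $P\in\Pi_n$ via its induced injection on $[m]$ and decompose $g(P) = \sum (\text{edge agreement terms})$; using the same bookkeeping as in the proof of Theorem~\ref{thm:suss} (which I may cite), show that with probability $\ge 1 - n^{-2}$, $\max_{P} g(P) - \max_{P\in\widetilde\Pi} g(P) \le C\, m(r_1 - r_3) + C'\sqrt{m}\,\log n$ for constants $C,C'$ depending on $p$ — i.e., the graph term cannot favor the strong template by more than this much once we are allowed to compare against the \emph{best} weak-template-aligned $P$ (the $Q$-block is free). \textbf{Step 2:} Lower-bound $h(\tilde P\oplus Q) - h(P)$ uniformly over all $P$ whose restriction to $[m]$ is \emph{not} the map realizing $\tilde P$: any such $P$ either (a) fails to hit the $S^{13}$-diagonal fully, losing $\ge \lambda(\mu_3 - \mu_4 - o(1))$ per missed vertex, or (b) hits some penalized strong-template diagonal entry, losing $\ge \lambda((1-\varepsilon)\mu_2 - \mu_4 - o(1))$, or (c) is exactly $\tilde P$ on $[m]$ (then $Q$ is irrelevant to both terms up to the padding block, which contributes $0$ under centered padding). \textbf{Step 3:} Apply Hoeffding/Bernstein to the $O(n^2)$ independent bounded entries of $S^{(2,\epsilon)}$ to replace the means $\mu_i$ by empirical sums up to additive error $\lambda\cdot o(m) = o(m^{1+\alpha})$ uniformly (union bound over at most $n^m$... no — better, union bound over the $\le n$ choices of which column each of the $m$ template rows maps to, handled row-by-row via a rank-one-perturbation / exchange argument so the union is over $\mathrm{poly}(n)$ events), giving failure probability $e^{-\omega(\log n)}$ from condition (i). \textbf{Step 4:} Combine: for any $P$ with restriction not equal to $\tilde P$ on $[m]$, $\tilde f_{\varepsilon,1}(\tilde P\oplus Q^*) - \tilde f_{\varepsilon,1}(P) \ge \Omega(m^{1+\alpha}) - o(m^{1+\alpha}) > 0$, where the $\Omega$ comes from Step 2 and the $o$ absorbs Steps 1 and 3 and the $(r_1-r_3)$ term via condition (iii); hence the argmax lies in $\widetilde\Pi$.

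\textbf{Main obstacle.} The delicate part is \emph{not} the two leading-order terms — their separation is built into the hypotheses — but rather the uniform control over \emph{all} $P\in\Pi_n$, including permutations whose restriction to $[m]$ is some exotic injection overlapping both embedded copies in a non-standard way, or landing partly outside $B^{(1)}\cup B^{(2)}$ entirely. For such $P$ the graph term $g(P)$ could in principle be anomalously large due to random edge coincidences, and one must show it cannot claw back the $\Omega(\lambda m)$ feature deficit. I expect to handle this exactly as in \cite{sussman2018matched}: bound $g(P)$ for ``bad'' $P$ by its expectation (which is strictly smaller than $g(P^*)$'s expectation by an $\Omega(m)$ amount whenever $P$ misaligns $\omega(1)$ vertices, since each misaligned edge loses correlation $\gtrsim r_3 \cdot p(1-p)$) plus a concentration term $O(\sqrt{m}\log n)$, and union-bound; the subtlety is that the number of such $P$ is $n^{\Theta(m)}$, so the concentration must beat $m\log n$ in the exponent, which is precisely why we need $m^{1-\alpha} = \omega(\log^4 n)$ rather than merely $\omega(\log n)$ — the exponent budget $m\log n$ is paid for by a deviation of order $m(r_1-r_3)$ being $o(m^\alpha)$ together with Azuma increments summing to $o(\lambda m) = o(m^{1+\alpha})$ with room to spare. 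Getting the union bound exponents to close cleanly, while keeping the final failure probability at $e^{-\omega(\log n)}$, is the one place where the stated rate conditions are genuinely used and where care is required.
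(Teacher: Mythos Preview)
Your high-level strategy---compute the expected gap $\tilde f_{\varepsilon,1}(\tilde P,\lambda)-\tilde f_{\varepsilon,1}(P,\lambda)$, establish concentration, union-bound over $P$---matches the paper's, and you correctly flag the union bound as the crux. But the modular split into Steps 1--4, treating the graph term as a uniform \emph{disadvantage} to be absorbed and the feature term as the \emph{advantage}, has a real gap in the regime where $P$ differs from $\tilde P$ on only $n_F=O(1)$ vertices. There your Step 2 gives a feature advantage $h(\tilde P)-h(P)$ with mean $\Theta(\lambda n_F c_\varepsilon)$, but this is a sum of only $2n_F$ independent entries each of size $\Theta(\lambda)$: the McDiarmid exponent from the feature term alone is $(\lambda n_F c_\varepsilon)^2/(\lambda^2 n_F)=\Theta(n_F)$, which cannot beat the union-bound cost $n_F\log n$. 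For a single misaligned vertex the feature gap and its fluctuation are \emph{both} order $\lambda$, so no concentration is available from features alone, and your Step 1 has already thrown away any positive graph contribution. (A secondary scale error: the graph gap between $P^*$ and $\tilde P$ is $\Theta(m^2(r_1-r_3))$, not $\Theta(m(r_1-r_3))$---there are $\Theta(m^2)$ affected edges.)

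The paper instead applies McDiarmid to the \emph{combined} difference $D_E(P)+\lambda D_F(P)$ and then runs a case analysis parametrized by misalignment counts $(a_1,a_2,b_1,b_2,b_3)$, using the entropy bound $\binom{k}{a_1}\binom{m-k}{b_1}\binom{m-k}{b_2}n^{a_2+b_3}$ to count permutations in each class. In the small-$n_F$, $b_2=\Theta(m)$ regime it is the \emph{edge} term that carries the proof: each misaligned vertex has $\Theta(m)$ edges to correctly aligned vertices, so $\mathbb E[D_E(P)]\gtrsim n_F m r_3$ with variance proxy $O(n_F m)+O(\lambda^2 n_F)$, giving exponent $\gtrsim n_F m^{2-2\alpha}\gg n_F\log n$ by condition (i). Conversely, when $b_1=\Theta(m)$ (heavy overlap with the strong template) the edge expectation can be negative and the feature term $\lambda n_F c_\varepsilon=\Theta(m^{1+\alpha})$ dominates via condition (iii). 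So neither term suffices globally; the case split on which one controls each regime is the missing ingredient in your outline.
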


Note that when $\lambda=0$, the objective function becomes the standard one considered in \cite{sussman2018matched}, and Theorem \ref{thm:suss} applies. However, when $\lambda>0$ is sufficiently large, the feature similarity becomes crucial in the objective function. By increasing $\varepsilon$, we force the global optimizer to move away from the first recovered template, thus expecting to recover a different in-sample subgraph.
When $\lambda$ is too large, the noise in the $S^{(2,\epsilon)}$ matrix (provided by the entries with mean $\mu_4$) can swamp the signal and suboptimal recovery is possible.
Note also that there are myriad combinations of parameter growth conditions under which Theorem 1 (or analogues) will hold, and our theorem is not claiming full generality.  We will not mine these conditions further herein. {We point out here that $p$ as a function of $n$ being bounded away from $0$ implies that the graph we consider here is dense, as in much of the other graph matching literature. Additionally, we could encode the diameter of the graphs into our objective function to eliminate disconnected matches, if any occur.}

{By properly realizing the asymptotic notation of $\omega$ as a fixed function and thus expressing $m$ in terms of $\log n$, we can easily translate our results into a finite sampling result. Note that most papers discussing graph matchabilities in the literature prove their results in an asymptotic context, meaning that the asymptotic behavior tells us what to expect for large enough networks, as seen in real data.}

\begin{rem}
Beyond independent entries for $S$, we could consider $S$ as a similarity between vertex features.
One possible approach to define such a similarity matrix $S$ is via well-constructed distance functions. 
In these cases, a function of two variables $s(a,b)$ (resp., $d(a,b)$ for a distance/dissimilarity function, where, for example, we could then define $s(a,b)=1/d(a,b)$) is defined such that $s$ increases (resp., $d$ decreases) as $|a-b|$ becomes smaller, e.g., $s(a,b) = e^{-|a-b|}$. For each node $i\in V_1=\{1,2,\cdots,m\}$ and $j\in V_2=\{1,2,\cdots,n\}$, if we model random vertex features via $X_i,Y_j\sim F$, we can then define  $S_{ij} = s(X_i,Y_j)$ for all $i,j$.
Note that Theorem \ref{thm:main} {relies only on the fact that the means of the Beta random variables are well-separated, so it} can be easily adapted to account for these vertex-dependent similarities, as long as the {scores satisfy certain tail decay conditions and the} expectation of similarity scores for node pairs under different recovery schemes are bounded away from each other. 
\end{rem}

We have developed an approach for recovering weaker signal versions of the embedded templates with high probability, under mild conditions. 
Our approach builds upon the method introduced in \cite{sussman2018matched}, where we initially recover the strongest in-sample template. Subsequently, we select a sufficiently large value for $\lambda$ and gradually increase $\varepsilon$ until we identify an additional in-sample template.
An important aspect is that once a second recovered template is found, we can iterate the algorithm by penalizing the similarity for both recovered templates. This allows us to discover more templates until we have exhausted all possibilities or until the penalty coefficient term for overlapping signal in unrecovered templates becomes too small compared to the noise in the non-signal nodes.

We next proceed to further demonstrate the validity of our proposed method via experiments in the Multiple Correlated Erdős-Rényi model and via two real data experiments.

\section{Experimental results}
\label{sec: experiment}

Our proposed modification of the \hyperref[alg:MultipleGMMF]{\mg algorithm} offers a simple and efficient solution for solution diversification. 
Moreover, incorporating a multiple of the  $\operatorname{tr}\left(S{P_{(1)}^T}\right)$ term into the existing Frank-Wolfe iterations allows us to efficiently incorporate {the} feature information.
Next, we will address scalability issues arising from the line search in steps (ii.a) and (ii.b) of the \hyperref[alg:MultipleGMMF]{\mg algorithm}.
To tackle this, note that the gradient $\nabla_P\tilde f(P^{(t)},\lambda)$ forms an $m\times n$ matrix. 
In practice, we often have $m\ll n$ (i.e., $m$ is significantly smaller than $n$). 
In the linear assignment search step (ii.b), each vertex in $A$ can only be assigned to one of the vertices in $B$ with the top $m$ values in the gradient. 
Therefore, for the allowable matchings for vertex $i$ in $A$, we compute the $m$ largest entries of row $i$ of $\nabla_P\tilde f(P^{(t)},\lambda)$.
All other row $i$ entries can be discarded. After performing this partial sorting operation for each of the $m$ rows (each row costing $O(n)$ time using the \texttt{Introselect} algorithm \cite{musser1997introspective})
the resulting matrix of allowed assignments is at most $m\times m^2$ and the Hungarian algorithm \cite{hungarian} applied to the Linear Assignment Problem (LAP) on this rectangular matrix is of complexity $O(m^4)$ \cite{tarjan}. 
Moreover, solving the LAP on this reduced matrix ensures the same result as solving it on the full $\nabla_P\tilde f(P^{(t)},\lambda)$. This approach effectively reduces the complexity of the LAP solver subroutine from $O(n^3)$ to $O(mn+m^4)$, and we observe substantial speedups in practice. 
We note that this complexity reduction technique was also discussed in \cite{bijsterbosch2010solving}. {The discussion above focuses only on the LAP subroutine of the \hyperref[alg:MultipleGMMF]{\mg algorithm} since it is often a computational bottleneck in the procedure. }

Due to the intractability of computing the exact graph matching solution in all but small cases,
in a few of the experiments below we make use of seeded vertices in the graph matching subroutine.
Seeded vertices are those whose correct alignment is known a priori.
In this case (assuming for the moment that the seeding maps vertices $\{1,2,\ldots,s\}$ in $A$ to $\{1,2,\ldots,s\}$ in $B$), the graph matching seeks to optimize $\tilde f(P,\lambda)$ over $P$ of the form $P=I_s\oplus Q$ (or $\hat f(P,\lambda)$ in the naive padding case) using the \texttt{SGM} algorithm proposed in \cite{ModFAQ}. 
In practice, seeds are often expensive to compute---indeed in the real data experiment on template detection in knowledge graphs below, we have no seeded vertices---though a few seeds can often lead to dramatically increased performance.
In the simulations below, the seeded approach helps overcome the computational intractability of the matching subroutine and are quite useful for demonstrating the utility of our solution diversification step.

\begin{figure}[t!]
    \centering
    \includegraphics[width = 0.47\textwidth]{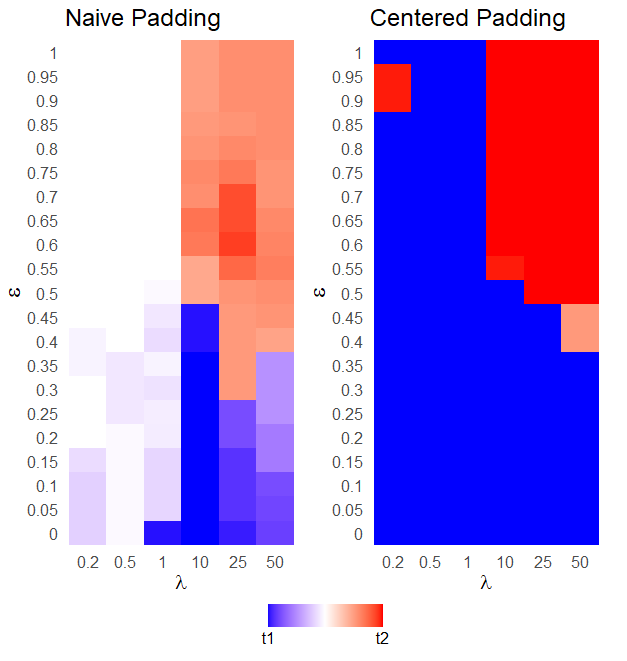}
    \caption{We fix $k=10$ and use the seeded \mg  algorithm to match $A$ with $B$ using 5 seeds randomly selected from the overlapping nodes of $B^{(1)}$ and $B^{(2)}$ as described in Section \ref{sec:N2}.
    We plot the recovering results over $\varepsilon$ (here $\varepsilon$ is used to penalize the stronger of the two embedded templates) and $\lambda$, averaged by 20 Monte-Carlo simulations. In the figures, stronger colors represent better recovery of the embedded templates, and t1 (blue) stands for template 1, t2 (red) stands for template 2, with white squares corresponding to the case when none of the two templates was recovered {or equal amounts of each template were recovered among the 20 simulations}.}
    \label{fig:N2Both}
\end{figure}

\subsection{Two overlapping templates}
\label{sec:N2}

Our first experiment verifies the proposed algorithm in our multiple correlated Erd\H os-R\'enyi model with $N=2$. 
In our setup, we take $\Lambda^{(1)} = \mathbf{0.8}_{50\times 50}, \Lambda^{(2)} = \mathbf{0.8}_{500\times 500}$.
{
To generate graphs with tractable edge correlations, we create an Erdős-Rényi background graph with edge probability $u$. 
We then sample induced graphs from the background graph by flipping the existence of each edge independently with probability $v$. Appendix 7.2 of \cite{li2023clustered} provides a formula for the correlation value as a function of the edge existence probability of the background graph, $u$, 
and the edge flipping probability, $v$. {Choosing $v$ to be $0.0074, 0.0168, 0.0326$ and $0.0495$ yields the correlation values used in our follow-up experiments as approximately $0.954, 0.897, 0.803$ and $0.706$, respectively (due to computational precision, these are estimates of $0.95, 0.9, 0.8, 0.7$ resp.).}
Now let} 
\begin{align*}
    R_1=\bordermatrix{
    &m-k&k\cr 
    m-k& R_1^{11}&R_1^{12}\cr
    k&(R_1^{12})^T& R_1^{22}};
\end{align*}
\begin{align*}
    R_2 = \bordermatrix{
    & k &m-k\cr
    k & R_1^{22} & R_2^{12}\cr
    m-k &(R_2^{12})^T& R_2^{22}}
\end{align*}
    where $R_1^{11}$ and $R_1^{12}$ are matrices with all entries set to 0.954; $R_2^{12}$ and $R_2^{22}$ are matrices with all entries set to $0.803$; and $R^{22}_1$ is a matrix with all entries set to 0.897. 
    Note the overlap in $R_1$ and $R_2$ is there to make sure the induced subgraphs $B^{(1)}$ and $B^{(2)}$ have $k$ overlapping nodes. 
    For the similarity matrix $S$, we set $$S=\bordermatrix{
    &m-k&k&m-k&n-2m+k\cr 
    m-k& S^{11}&S^{12}&S^{13}&S^{14}\cr
    k&S^{21}& S^{22}&S^{23}&S^{24}}$$
    where all entries of $S$ are independent Beta random variables, such that 
    \begin{align*}&\text{ the diagonal elements of }S^{11}\sim \operatorname{Beta}(\text{{$\mu_1=0.6$}});\\
    &\text{ the diagonal elements of }S^{22}\sim \operatorname{Beta}(\text{{$\mu_2=0.55$}});\\
    &\text{ the diagonal elements of }S^{13}\sim \operatorname{Beta}(\text{{$\mu_3=0.5$}})\end{align*}
    and all other entries are sampled from $\operatorname{Beta}(\text{{$\mu_4=0.1$}})$. Note that for Beta distribution {with parameters $\alpha, \beta$}, {we have }$\mu = \frac{\alpha}{\alpha+\beta}$. We randomly sample $\alpha\sim U(0,1)$ and use the specified $\mu$ to calculate the corresponding $\beta$. Other combinations of parameters are also explored and plotted, see Appendix \ref{app:N2}.

   We fix $k=10$ (see Appendix \ref{app:N2}  for the case of $k=15,40$), and use the seeded \mg algorithm with 5 seeds randomly selected from the overlapping nodes of $B^{(1)}$ and $B^{(2)}$. 
    In Figure \ref{fig:N2Both} we use the naive padding (left) and the centered padding (right) and plot results over numerous choices of $\varepsilon$ (here $\varepsilon$ is used to penalize the stronger of the two embedded templates) and $\lambda$, averaged over 20 Monte-Carlo simulations.
    In the figures, stronger colors represent better recovery of the embedded templates, and t1 (blue) stands for template 1, t2 (red) stands for template 2, with white squares corresponding to the case when none of the two templates was recovered {or equal amounts of each template were recovered among the 20 simulations}. 
    From the figures, we see that when $\varepsilon$ is small we recover the stronger embedded template $(B_1)$, and as $\varepsilon$ increases we move away from the stronger embedded template, and---provided a suitable value of $\lambda$---we successfully recover the weaker embedded template as desired.
    From the plots, we can see the centered padding outperforms the naive padding for recovering the second template in this multiple correlated Erd\H os-R\'enyi model, which aligns with the results proven in \cite{sussman2018matched}.
    The phenomenon is clearer for larger $k$ (see, for example, Figure \ref{fig:bigK} in the Appendix), where the naive padding detects either only template 1 or nothing (denoted by the white in the plot) for whatever $\lambda$ we choose.

\begin{figure}[t!]
    \centering
    \includegraphics[width = 0.4\textwidth]{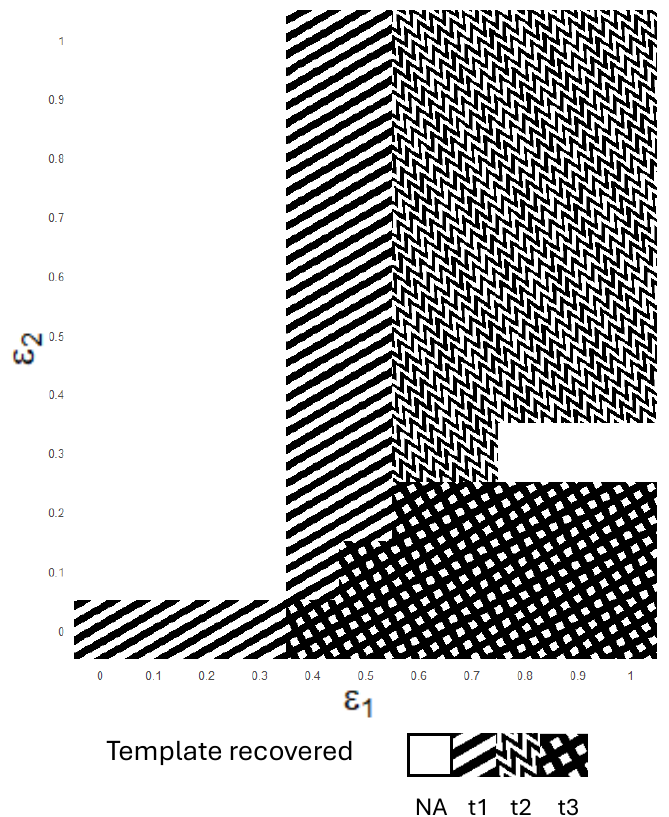}
    \caption{We fix $k=10,\lambda=25$ and use the seeded \mg algorithm with the centered padding to match $A$ with $B$ using 5 seeds randomly selected from the overlapping nodes of $B^{(1)}, B^{(2)}$ and $B^{(3)}$, where $B^{(1)}, B^{(2)}$ and $B^{(3)}$ are induced subgraph of $B$ such that graphs $A$ and $B$ follows multiple correlated ER model as described in Section \ref{sec:N3}. We plot the recovering results over $\varepsilon_1$ (penalty applied to the diagonal elements of $S^{(11)}, S^{(22)}$) and $\varepsilon_2$ (penalty applied to the diagonal elements of $S^{(13)}, S^{(22)}$), averaged by 20 Monte-Carlo simulations. 
In the figure, the different patterns represent which template was recovered (in majority): t1 for template 1, t2 for template 2, and t3 for template 3, with white squares corresponding to the case when none of the three templates was recovered.}
    \label{fig:N3}
\end{figure}
\subsection{Three overlapping templates}
\label{sec:N3}
To better illustrate the iterative feature of the proposed algorithm, we construct a multiple correlated Erd\H os-R\'enyi model with $N=3$ and apply our algorithm to attempt to recover all of the three embedded templates. 
In this experiment, we take $\Lambda^{(1)} = \mathbf{0.8}_{50\times 50}, \Lambda^{(2)} = \mathbf{0.8}_{500\times 500}$,  
\begin{align*}
    R_1=\bordermatrix{
    &m-k&k\cr 
    m-k& R_1^{11}&R_1^{12}\cr
    k&(R_1^{12})^T& R_1^{22}};
\end{align*}
\begin{align*}
R_2 = \bordermatrix{
    & k &m-k\cr
    k & R_1^{22} & R_2^{12}\cr
    m-k &(R_2^{12})^T& R_2^{22}}
\end{align*}
    and $$R_3 = \bordermatrix{
    & k &m-k\cr
    k & R_1^{22} & R_3^{12}\cr
    m-k &(R_3^{12})^T& R_3^{22}},$$ 
    where $R_1^{11}$ and $R_1^{12}$ are matrices with all entries set to 0.954; $R_2^{12}$ and $R_2^{22}$ are matrices with all entries set to $0.803$; 
    $R_3^{12}$ and $R_3^{22}$ are matrices with all entries set to $0.706$; and $R^{22}_1$ is a matrix with all entries set to 0.897. 
    Note again this structure ensures the induced subgraphs $B^{(1)}, B^{(2)}$ and $B^{(3)}$ have exactly $k$ pairwise overlapping nodes, and to make the $B^{(1)}$ a better probabilistic match (i.e., a stronger embedding) than $B^{(2)}$ which is a better probabilistic match than $B^{(3)}$.
    
   For the similarity matrix $S$, we set $S$ to be
   \begin{align*}
    \bordermatrix{
    &m-k&k&m-k&m-k&n-3m+2k\cr 
    m-k& S^{11}&S^{12}&S^{13}&S^{14}&S^{15}\cr
    k&S^{21}& S^{22}&S^{23}&S^{24}&S^{25}}
    \end{align*}
    where all entries of $S$ are independent Beta random variables, such that 
    \begin{align*}
    &\text{ the diagonal elements of }S^{11}\sim \operatorname{Beta}(\mu = 0.7)\\
        &\text{ the diagonal elements of }S^{22}\sim \operatorname{Beta}(\mu=0.6)\\
        &\text{ the diagonal elements of }S^{13}\sim\operatorname{Beta}(\mu=0.55)\\
        &\text{ the diagonal elements of }S^{14}\sim\operatorname{Beta}(\mu=0.5)
    \end{align*}
    and all other entries are sampled from $\operatorname{Beta}(\mu=0.1)$. Again for all the Beta distributions, we randomly sample $\alpha\sim U(0,1)$ and use the specified $\mu$ to calculate the corresponding $\beta$.
    
In Figure \ref{fig:N3}, we fix $k=10, \lambda=25$ (see Appendix \ref{app:N3} for other combinations of $(k,\lambda)$), and use the seeded \mg algorithm with the centered padding (the naive padding behaved sub-optimally for recovering template 3, see Appendix \ref{app:N3} for details) and 5 seeds randomly selected from the overlapping nodes of $B^{(1)}, B^{(2)}$ and $B^{(3)}$. We plot the recovering results over $\varepsilon_1$ (penalty applied to the diagonal elements of $S^{11}, S^{22}$) and $\varepsilon_2$ (penalty applied to the diagonal elements of $S^{13}, S^{22}$), averaged by 20 Monte-Carlo simulations. 
In the figure, the different patterns represent which template was recovered (in majority): t1 for template 1, t2 for template 2, and t3 for template 3, with white squares corresponding to the case when none of the three templates was recovered. {Note that when $\varepsilon_1\in [0,0.3]$, we have not recovered template 2, thus it would be impossible to have a second penalty term associated with $\varepsilon_2$, the corresponding parts are left white and should not be interpreted. }
As we can see, when both $\varepsilon_1, \varepsilon_2$ are small, we recover the strongest embedded template; as $\varepsilon_1$ increases, we move away from the strongest embedded template to the second strongest embedded template; finally when both $\varepsilon_1, \varepsilon_2$ get large enough, we recovered the third embedded template as desired.

\begin{figure}[t!]
    \centering
    \includegraphics[width=0.4\textwidth]{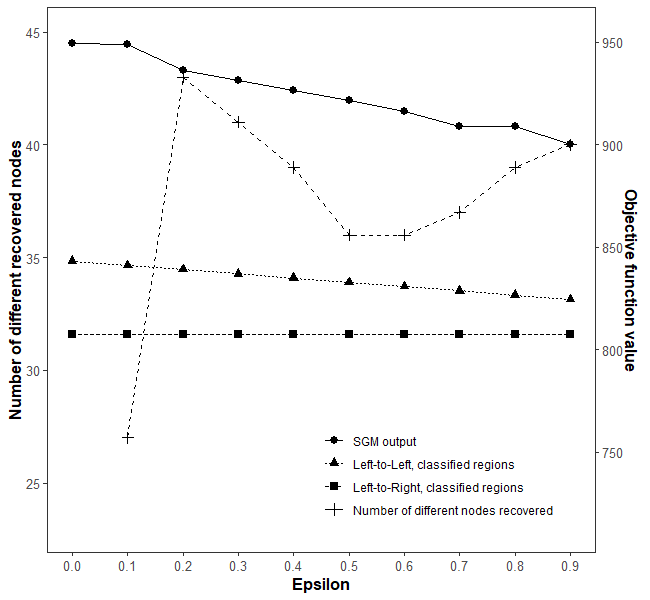}
    \caption{We run our proposed algorithm using the seeded \mg algorithm with 500 restarts and 5 seeds selected from the node pairs $\{(s_j,w_j)\}_{j=1}^6$ as described in \ref{sec:brain}, taking the result with highest objective function value (Eq. \ref{npgmp}) as the output. For each $\varepsilon$, we compute the GM objective function value (left axis) of the resulting matrix with the template; we also computed the objective function value with respect to the alignment given by the template to the same classified brain region in the left hemisphere in $B$ (Left--to--Left in the plot), as well as the objective function value given by the template to the symmetric region from the right hemisphere in $B$ (Left--to--right in the plot). Also for $\varepsilon>0$, we calculated the number of novel nodes recovered in each matching compared to the subgraph detected with $\varepsilon=0$ (right axis).}
    \label{fig:MRI}
\end{figure}
\subsection{MRI Brain data}
\label{sec:brain}
We now apply the proposed algorithm to a real data set of human connectomes from \cite{zuo2014open}, where we consider the BNU1 test-retest connectomes processed via the pipeline at \cite{kiar2018high}
(see \url{http://fcon_1000.projects.nitrc.org/indi/CoRR/html/bnu_1.html} and \url{https://neurodata.io/mri/} for more detail).
The dataset consist of test-retest DTI data for each patient processed into connectome graphs.
Moreover, the brain graphs here are segmented into regions of interest and contain $(x,y,z)$ DTI coordinates. 
We chose patient ``subj1" for our present experiment. In 
the first connectome, which contains 1128 nodes, we select a region (region 30) with 46 nodes from the left hemisphere to act as our template. 
We then extracted the subgraph induced by the selected region and considered it as our graph $A$. The entire graph of the other brain scan from the same patient, which contains 1129 nodes, was designated as our graph $B.$
Our goal is to recover both the same region of interest in the left hemisphere (we consider this the ``strong'' embedded template) and the corresponding region (region 65) in the right hemisphere (the ``weak'' embedded template).

To construct the similarity matrix, we consider the $(x,y,z)$ coordinates of each node in the processed MRI scan from graphs $A$ and $B$. 
Subsequently, we randomly selected 12 nodes, denoted as $s = \{s_1, \ldots, s_{12}\}$, from graph $A$. For nodes ${s_1, \ldots, s_6}$, we identified the corresponding nodes ${w_1, \ldots, w_6}$ from the same region, same hemisphere, in graph $B$. 
Whereas for nodes ${s_7, \ldots, s_{12}}$, we identified the corresponding nodes ${w_7, \ldots, w_{12}}$ from the corresponding region in the other hemisphere relative in graph $B$. 
Note, these are not seeded vertices, but are simply used to construct the similarity matrix $S$.
Informally, we consider each pair of $(s_i, w_i)$ as a ``bridge'' which has distance 0; allowing us to define a suitable distance across hemispheres for any nodes $u\in A$ and $v\in B$ as follows.  
Setting the distance between corresponding seeded nodes across hemispheres to be $0$, we define the distance via
$$d(u,v)=\min_{j\in [12]}\{\|u-s_i\|_2+\|v-w_i\|_2\}$$
Here, $\|\cdot\|_2$ represents the standard Euclidean distance. We defined our similarity matrix such that $S_{ij} \propto -d(u_i, v_j)$.

We executed our proposed algorithm using the seeded \mg algorithm with 500 restarts, with 5 seeds selected out of the node pairs $\{(s_j,w_j)\}_{j=1}^6$, so the seeds exist in the left-hemisphere only; and we selected the result with the highest objective function value (in Eq. \ref{npgmp} using naive padding with $\lambda = 0.1$ as the final match; see appendix \ref{app:brain} for the case of $\lambda=1$). 
Naive padding worked well with the irregular structure of the brain networks here, and we are actively researching whether centered or naive padding is more appropriate in non edge-independent models.
For each $\varepsilon$, (plotted in Figure \ref{fig:MRI}) we compute the GM objective function value (right axis) of the resulting matrix with the template; we also computed the objective function value with respect to the alignment given by the template to the same classified brain region in the left hemisphere in $B$ (Left--to--Left in the plot), as well as the objective function value given by the template to the symmetric region from the right hemisphere in $B$ (Left--to--Right in the plot). Also for $\varepsilon>0$, we calculated the number of novel nodes recovered in each matching compared to the subgraph detected with $\varepsilon=0$ (left axis, ``Number of different nodes recovered'' in the plot).  
As expected, the objective function value obtained from the output of the seeded \mg algorithm is better than the ground truth alignment.
Furthermore, by increasing $\varepsilon$ beyond 0.1, we observed a deviation from the original recovered template, leading to the discovery of a new subgraph matching the template close to optimally. 
We comment that the decrease in the objective function value based on the alignment provided by the classified brain regions across the scans is a result of the seeds in the \texttt{SGM} algorithm, where the similarity scores between these 5 seeds pairs decrease as $\varepsilon$ increases.

{We close this example mentioning that by judiciously encoding neuronal information via the feature similarity matrix, the performance of the template recovery increases dramatically.
The power of the similarity formalism is that it enables incorporation of any feature information for which similarities can be computed.
For example, in the knowledge graph example of Section \ref{sec:TKB} the similarity encodes both numeric/quantitative and semantic/qualitative (i.e., ontological) features together.
We note however that an adversarial $S$ could potentially break our approach, as it could violate our working assumption of positively correlated edge structures and node similarities. }

\subsection{Template discovery in TKBs}
\label{sec:TKB}
For our second real data example, we consider the transactional knowledge base (TKB) of \cite{purohit2021transactional}.  
The graph is constructed from a variety of information sources including news articles, Reddit, Venmo, and bibliographic data.
Moreover, nodes and edges are richly attributed. 
Node attributes include a unique node ID, node type (according to a custom ontology), free text value, entry ID (used to identify the node in the Wikidata Knowledge Base), date and latitude/longitude.
Edge attributes include a unique edge ID, edge type (according to a custom ontology), and edge argument (providing additional edge information).
See \cite{purohit2021transactional} for more information on the construction of this network and details on the custom ontological structure.

Along with the large background graph, \cite{purohit2021transactional} describes the creation of multiple signal templates (with varying levels of noise) to search for in the background. In addition to perfectly aligned templates (i.e., background subgraphs isomorphic to the template), templates are embedded with different and varying noise levels, necessitating noisy template recovery. 

The full graph has 14,220,800 nodes and 157,823,262 edges.  For each template, we do some simple preprocessing of the graphs that reduces their size (for instance, removing node types that do not appear in the template, removing dangling edges, etc).  This preprocessing yields the pruned graphs that are fed into our matching algorithm, which have approximately $13 \times 10^6$ nodes and $32 \times 10 ^ 6$ edges.
As in \cite{pantazis2022multiplex}, we create a multiplex network from this TKB by dividing edge types (from the different sources) and different ontological edge types into multiple weighted graph layers (weighted based on a measure of ontological similarity), and using node features to define a node--to--node similarity matrix. 
Note that we use naive padding here, as the edge structure is naturally weighted and optimal centering in the weighted case is nuanced and the subject of present study.
The multiplex adaptation of the \mg procedure can be found in \cite{pantazis2022multiplex}, and amounts to adapting the Frank-Wolfe approach to the objective functions
\begin{align*}
\underset{P \in \Pi_n}{\text{argmax}} {\sum_i}\operatorname{tr}\Big(\Big[A^{(i)}\oplus &0_{n-m,n-m}\Big] P B^{(i)} P^{T}\Big) \\
+&\lambda\operatorname{tr}\left(S{P_{(1)}^T}\right)
\end{align*}
where $A^{(i)}$ (resp., $B^{(i)}$) represent the template structure (resp., background structure) in layer $i$ of the multiplex graph and where nodes with a common label across layers are assumed aligned.

We show the effect of the solution diversification via the following experiment.  
To measure the fidelity of recovered templates when no ground truth is present, we use the graph edit distance (GED) metric outlined in \cite{ebsch2020using}.
We run 32 random restarts of the \mg algorithm for each template recovery (we plot results for template 1A, 1B, 1C, 1D here; results for templates 2 and 3 can be found in Appendix \ref{app:tkb}), plotting the empirical CDF of the GED of the recovered templates; results are plotted in Figure \ref{fig:tkb1}.
Different penalization values are represented with different colors in the plot.
\begin{figure*}[t!]
    \centering
    \includegraphics[width=0.8\textwidth]{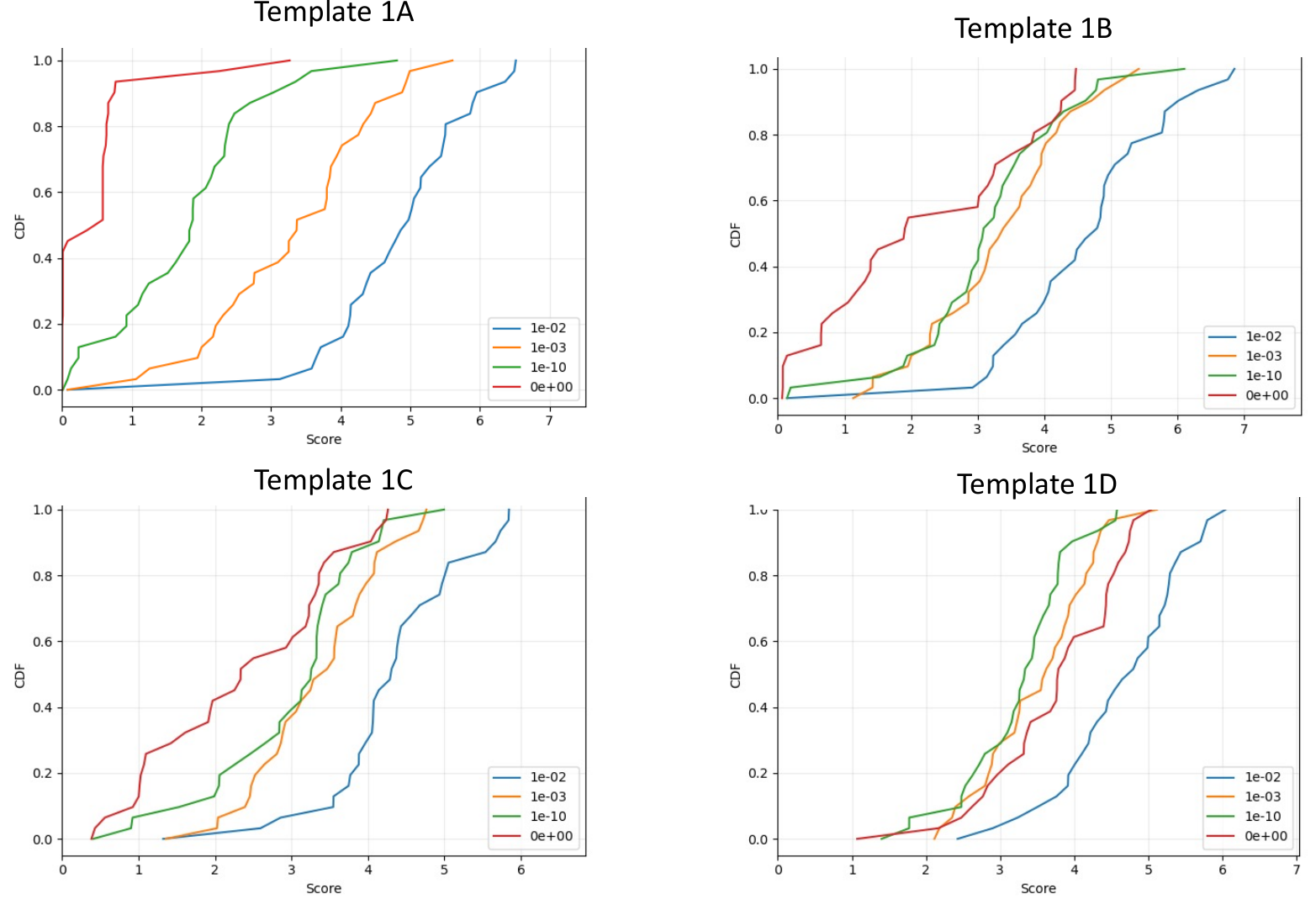}
    \caption{We run 32 random restarts of the \mg algorithm for each template recovery, plotting the empirical CDF of the GED of the recovered templates.
Different penalization values are represented with different colors in the plot.}
    \label{fig:tkb1}
\end{figure*}
Note that in each template, version A has an isomorphic copy in the background while this is not guaranteed for the other templates (as they have noise introduced in the embeddings).
From Figure \ref{fig:tkb1}, we see that the solution diversification is successful at yielding recovered templates including our optimal fits (the best recovered templates in the $\epsilon=0$ case) and templates that are close to optimal by GED.  These templates further recover significant signal not recovered in the $\epsilon=0$ case; see Table \ref{tab:temp1}.
In the table, for each vertex in the template we show how many vertices in the background have similarity greater than 0 (i.e., are potential matches)---this is shown in the \#$>0$ row.
We see that the solution diversification is successful at recovering additional possible matches in the suboptimal recovered templates.
We also see that too severe of a penalty (here the $\epsilon=0.01$ case) yields fewer unique nodes and worse GED fits. {In summary, when there is no penalty, we consistently detect only a limited number of certain templates. When a penalty term is applied, we successfully recover some new templates. However, as the penalty becomes too large after a certain number of iterations, the noise in the non-signal nodes starts to dominate the signal in the uncovered templates, resulting in some sub-optimal recoveries.} While choosing an optimal $\epsilon$ is of paramount importance, we do not have a fully principled recommendation for a best choice.
We do recommend smaller penalty combined with more random restarts which achieved our best results. {We close this example by noting that, because very few nodes in the background have a similarity greater than 0 to nodes in our templates, it is possible to filter the background by removing other nodes. However, the combinatorics after filtering remain complex; see \cite{moorman2021subgraph} for more detailed discussions. 
That said, this filtering could be used to ``soft seed" our graph matching (see \cite{fang2018tractable}). 
We would prefer the soft seeding (where seeds are used to initialize the matching but not fixed throughout), as it is possible that for some nodes in the background, the similarity is incorrectly calculated as 0 but the matching should still be made, making a hard seed filtering step perhaps less desirable.}

\begin{table*}[b!]
\centering
\begin{tabular}{c|ccccccccccccccccccccccccccccccccccccccccccccccccccccc}
node&1&2&3&4&5&6&7&8&9&10&11&12&13&14&15&16&17&18&19&20&21&22&23\\\hline
 $\epsilon=0$& 1 & 8&  1&  3 &20&  1&  1&  1 & 3&  1& 14&  2&  1&  1&  5&1&  1&  1&  1 &2&  3& 1&  3 \\ \hline
 $\epsilon=0.01$& 1 &23&  2& 15 &32&  1&  1&  1 & 8&  1& 24&  5&  4&  1& 24& 1&  1&  1&  1 &3& 16& 1& 14 \\ \hline
 $\epsilon=10^{-3}$& 4 &21& 20& 29 &32&  5&  5&  4 &27&  1& 26&  5& 27&  3& 31&1&  4&  4&  3 &3& 27& 5& 26 \\ \hline
 $\epsilon=10^{-10}$& 4 &32& 30& 32 &31&  5&  5&  4 &32&  5& 30&  5& 32&  4& 32&4& 20&  4&  3 &3& 32& 5& 32 \\ \hline\hline
  \#$>0$& 4 &32& 32& 32 &32&  5&  5&  4 &32&  5& 32&  5& 32&  4& 32&4& 32&  4&  3 &3& 32& 5& 32 \\ \hline\hline\hline
node& 24&25&26&27&28&29&30&31&32&33&34&35&36&37 \\\hline
 $\epsilon=0$&19&  1&  1&  1&  3&  4&  1&3&  4&  1&  2&  4&  1&  1  \\ \hline
 $\epsilon=0.01$&32&  1&  2&  2&  5& 18&  1& 3& 25&  1&  2&  5&  1&  1 \\ \hline
 $\epsilon=10^{-3}$&31& 12&  5&  4&  5& 27&  4&13& 31&  4& 15&  5&  5&  6 \\ \hline
 $\epsilon=10^{-10}$&32& 29&  5&  4&  5& 32&  4&28& 32&  4& 31&  5&  5& 21 \\ \hline\hline
 \#$>0$&32& 32&  5&  4&  5& 32&  4&32& 32&  4& 32&  5&  5& 32 
 \end{tabular}
 \caption{In Template 1A, we list for each node how many vertices in the background have similarity greater than 0 (i.e., are potential matches)---this is shown in the \#$>0$ row).  We then show for the different penalization levels, how many of these possible matches were recovered across the 32 random restarts.}
 \label{tab:temp1}
 \end{table*}

\section{Conclusion and discussion}
In this paper, we have introduced a workflow for iteratively identifying multiple instances of noisy embedded templates within a large graph. Our approach extends the matched-filters-based method for noisy subgraph detection by considering both the edgewise structure and node feature similarities. By incorporating these factors, we have achieved a more diversified and scalable approach to effectively uncover embeddings of noisy copies of graph templates.
The theoretical analysis of our algorithm demonstrates that, under the assumption of a strong correlation between the edgewise structural similarities and node-wise feature similarities, our approach can successfully identify multiple embedded templates within a large network. To validate the effectiveness of our proposed workflow, we conducted experiments using simulations based on {the} Multiple {Correlated} Erd\H os R\'enyi models, as well as real-world data sets such as human brain connectomes and the TKB dataset.

Furthermore, we present several intriguing questions that merit further exploration. In particular, the manuscript assumes an agreement between edge-structural similarity and node feature similarity. It would be valuable to investigate scenarios where such an agreement is absent, specifically identifying sharp parameter thresholds that lead to 
edge-structure dominated recovery, node-feature dominated recovery, and mixed-effect recovery. In all three cases, it is crucial to establish robust measures for evaluating the correctness of edge-wise matching between the template and the recovered template.
Additionally, we highlight the issue of overlapping nodes between two embedded templates. The reliability of our algorithm relies on the ratio of overlapping parts between the templates being moderate. However, if the ratio is excessively high, penalizing already recovered templates may lead to sub-optimal results. To address this concern, it would be beneficial to develop methods that specifically target penalization on the non-overlapping regions while preserving the signal of the overlapping region, thereby enhancing the algorithm's performance.

\vspace{2mm}
\bibliographystyle{plain}
\bibliography{biblio.bib}

\vspace{-8mm}
\begin{IEEEbiography}[{\includegraphics[width=1in,keepaspectratio]{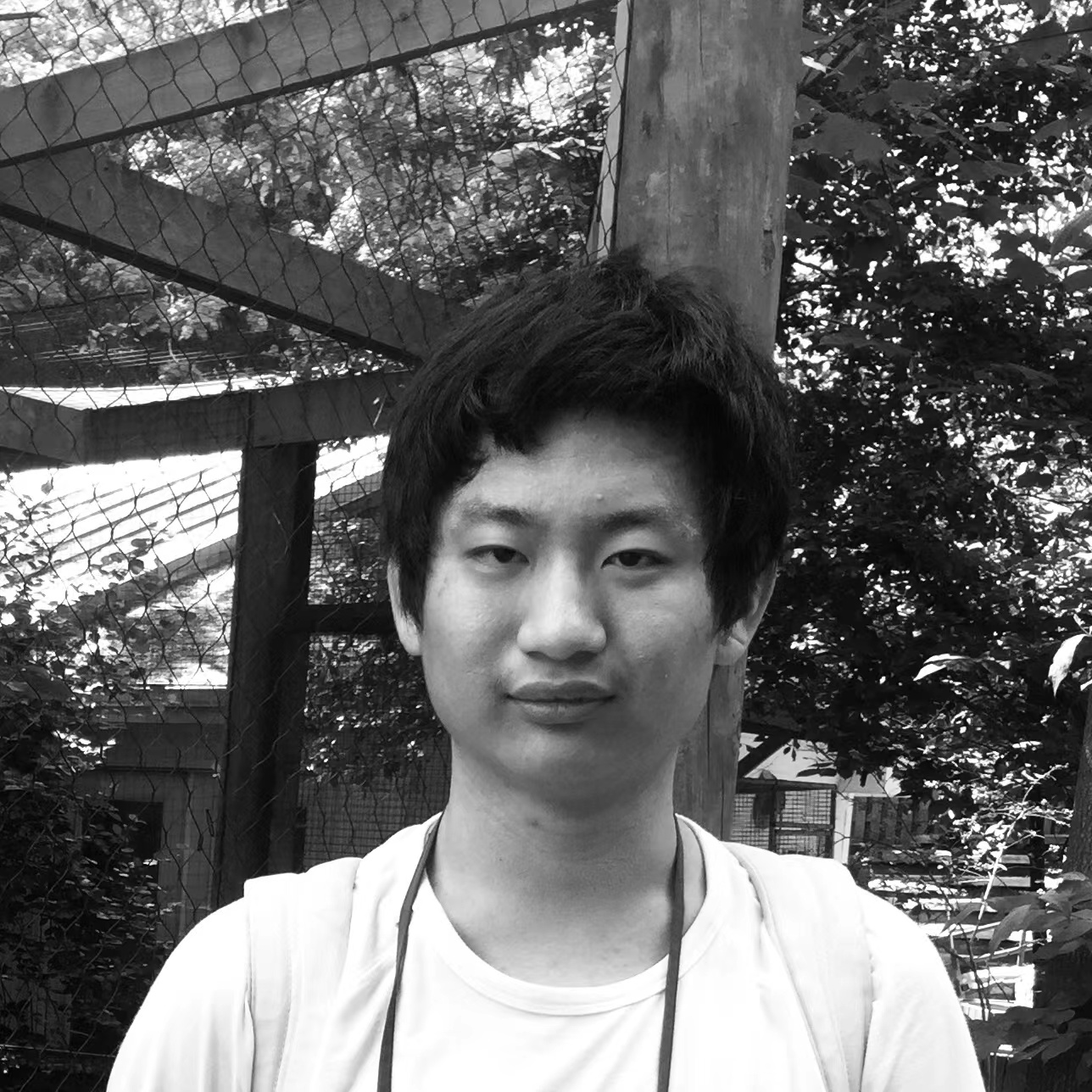}}]{Zhirui Li}
 received his BS degrees in Mathematics and Statistics from the University of Iowa in
2020. He is currently a doctoral student in the AMSC program at the University of Maryland, College Park, working under the supervision of Dr. Vince Lyzinski. His research interests include graph matching, stochastic processes, numerical optimization and statistical machine learning.
\end{IEEEbiography}

\vspace{-8mm}
\begin{IEEEbiography}[{\includegraphics[width=1in,keepaspectratio]{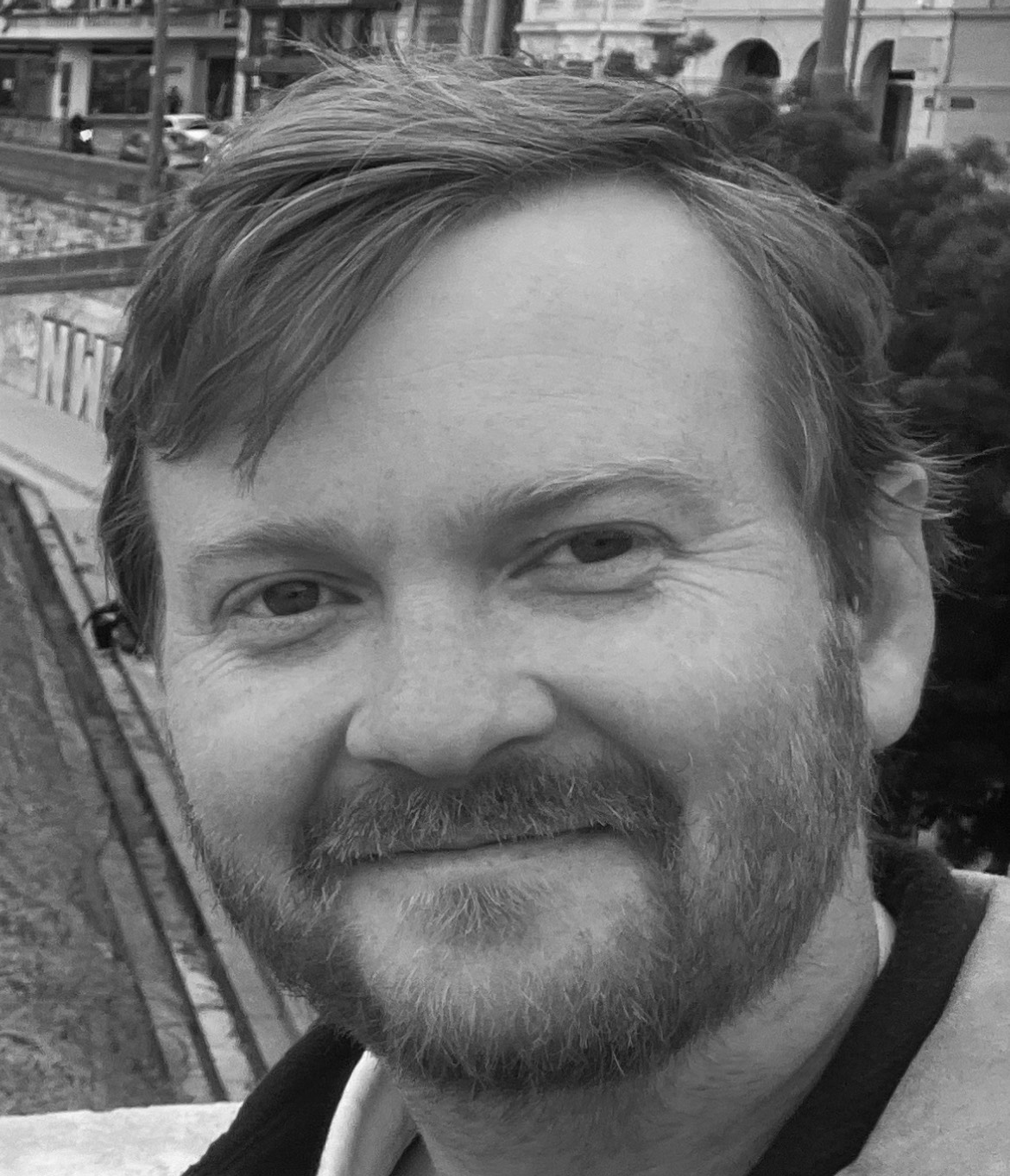}}]{Ben K Johnson} received a AB degree in mathematics from Bowdoin College in 2012 and a MA in statistics from Yale University in 2013.  He is Chief Scientist of Jataware Corp where he focuses on both fundamental and applied research in AI/ML.
\end{IEEEbiography}

\vspace{-8mm}
\begin{IEEEbiography}[{\includegraphics[width=1in,keepaspectratio]{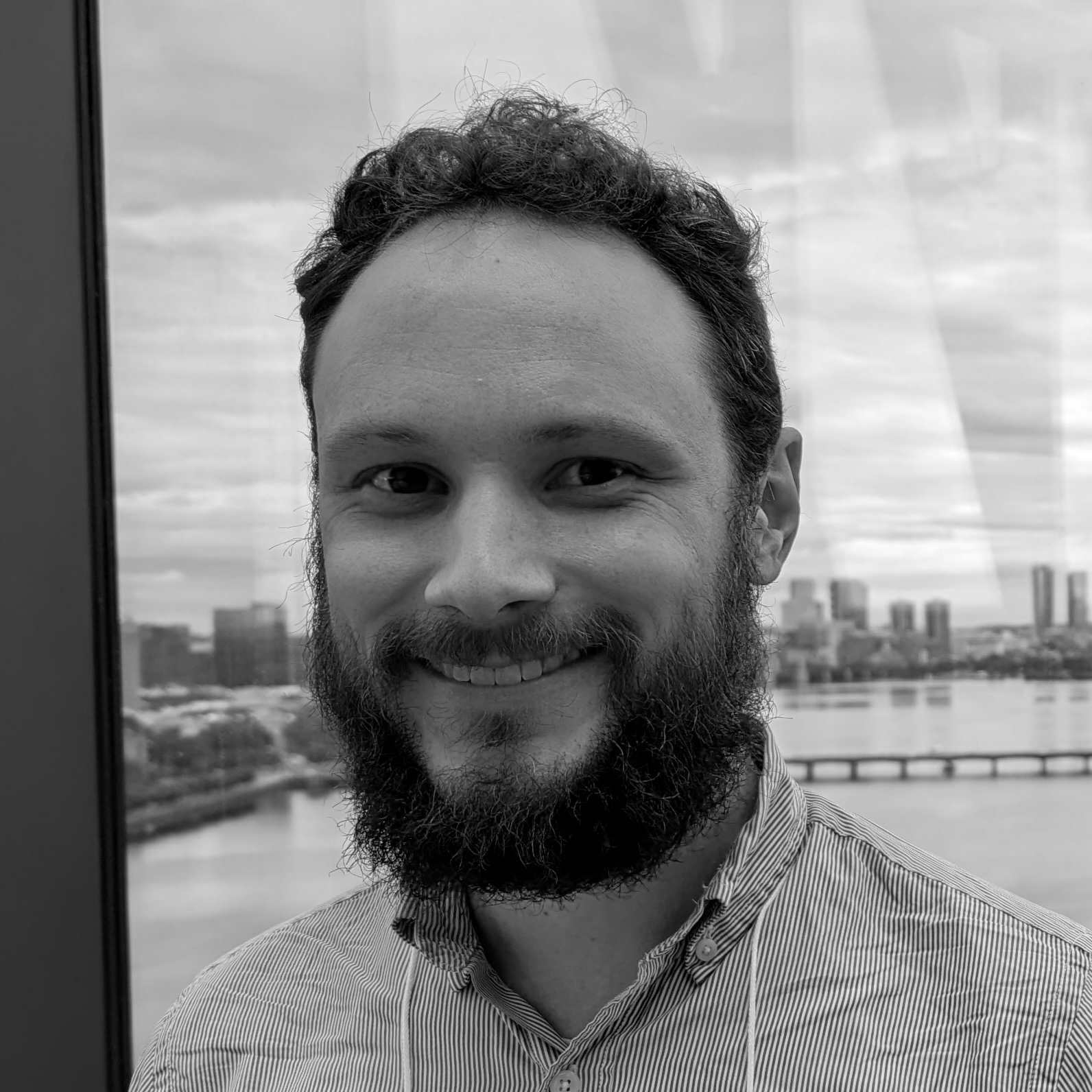}}]{Daniel L. Sussman} received the B.A. degree in mathematics from Cornell University, Ithaca, NY, USA, in 2008, and the Ph.D. degree in applied mathematics and statistics from Johns Hopkins University, Baltimore, MD, USA, in 2014. From 2014 to 2016, he was a Postdoctoral Fellow with Harvard University Statistics Department, Cambridge, MA, USA. He is currently an Assistant Professor with the Mathematics and Statistics Department, Boston University, Boston, MA, USA. His research interests include spectral methods for graph inference, multiple graph inference, graph matching, connectomics, and causal inference under interference.
\end{IEEEbiography}

\vspace{-8mm}
\begin{IEEEbiography}[{\includegraphics[width=1in,keepaspectratio]{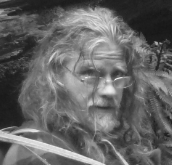}}]{Carey E. Priebe} received the BS degree in mathematics from Purdue University in 1984, the MS degree in computer science from San Diego State University in 1988, and the PhD degree in information technology (computational statistics) from George Mason University in 1993. From 1985 to 1994 he worked as a mathematician and scientist in the US Navy research and development laboratory system. Since 1994 he has been a professor in the Department of Applied Mathematics and Statistics at Johns Hopkins University, where he is director of the Mathematical Institute for Data Science (MINDS). He is a Senior Member of the IEEE, an Elected Member of the International Statistical Institute, a Fellow of the Institute of Mathematical Statistics, and a Fellow of the American Statistical Association.
\end{IEEEbiography}

\vspace{-8mm}
\begin{IEEEbiography}[{\includegraphics[width=1in,height=1.25in,clip,keepaspectratio]{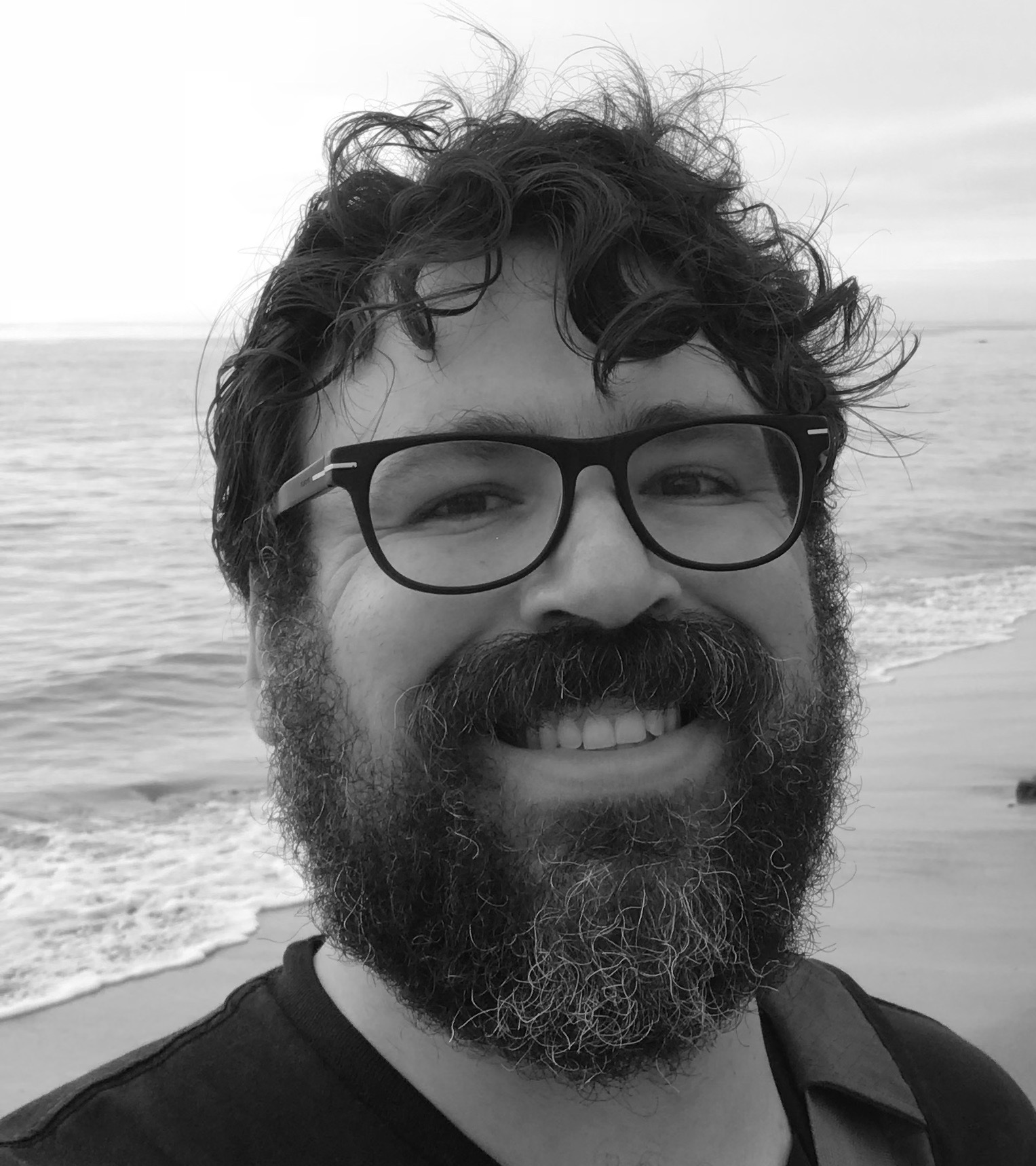}}]{Vince Lyzinski}
 received the BSc degree in Math. from the Univ. of Notre Dame in
2006, the MA in Math. from Johns
Hopkins Univ. (JHU) in 2007, the MSE
in Applied Math. and Stat. from JHU in 2010, and the PhD in Applied Math. and Stat. from JHU in 2013.
From 2013-2014 he was a postdoctoral fellow
with the Applied Math. and Stat.
(AMS) Dept., JHU. During 2014-2017, he
was a senior research scientist with the JHU
HLTCOE and an assistant research professor with the AMS Dept., JHU.
From 2017-2019, he was on the Faculty in the Dept. of Math. and Stat. at the Univ. of Massachusetts, Amherst. Since 2019 he has been on the Faculty in the Dept. of Math. at the Univ. of Maryland, College Park, where he is currently an Assoc. Prof. His research interests include graph matching, statistical
inference on random graphs, pattern recognition, dimensionality reduction, stochastic processes, and high-dim. data analysis.
\end{IEEEbiography}

\onecolumn

\section{Appendix}
Herein, we collect proofs of our main theoretical results as well as additional experiments.

\subsection{Proof of Theorem 2:}
\label{app:pf}
We restate Theorem \ref{thm:main} here before providing a proof.
\vspace{3mm}

\noindent \textbf{Theorem 2:}
Let $A$ and $B$ be two graphs constructed as above. If there is a constant $\alpha\in [1/2,1)$ such that
\begin{itemize}
    \item[i.] $m-k=\Theta(m)$; $m^{1-\alpha}=\omega(\log^4 n)$
    \item[ii.] $\lambda=m^{\alpha}$;
    \item[iii.] $(r_1-r_3)\ll m^{\alpha-1}$; $r_1>r_2>r_3$ are bounded away from $0$ and $1$;
    \item[iv.] $\mu_3>(1-\varepsilon)\mu_1$ and $(1-\varepsilon)\mu_2>\mu_4$; the differences $\mu_3-(1-\varepsilon)\mu_1$, $(1-\varepsilon)\mu_2-\mu_4$, and $\mu_3-\mu_4$ are bounded away from 0;
    \item[v.] $p$ is bounded away from $0$ and $1$;
\end{itemize}
then if $\widetilde \Pi$ is the set of permutations perfectly aligning the weakly embedded template, we have that
$$\mathbb{P}(\text{argmax}_{P\in\Pi}{\tilde f}_{\varepsilon,1}(P,\lambda)\subset \widetilde \Pi)\geq 1-e^{-\omega(\log n)}.$$
{where we recall ${\tilde f}_{\varepsilon,1}(P,\lambda)$ is the objective function defined as ${\tilde f}_{\varepsilon,1}(P,\lambda) = \operatorname{tr}\!\left(\tilde A P \tilde B P^{T}\right) \!+\! \lambda\operatorname{tr}\!\left(S^{(2,\epsilon)}P_{(1)}^T\right)$}

\begin{proof}
{The proof contains three steps. First, we calculate the expected contributions to the differences of the objective functions between (any one of) the permutations that recover the weakly embedded template, $\tilde P$, and another permutation $P$, due to the edge structures and the node features. See Propositions \ref{prop:edgediff} and \ref{prop:nodediff} below. Then, we use our assumptions to demonstrate that the actual difference must concentrate enough around its mean and be bounded away from 0 through the McDiarmid's inequality. See Theorem \ref{thm:McD} below. Finally, by applying a union bound on all possible permutations, we establish that the probability of the difference being bounded away from 0 for all possible permutation matrices is at least $1-e^{-c_n}$ where $c_n$ is proportional to the expected difference calculated in the first step. Combining this with the fact that the expected difference diverges to infinity as $n$ increases, the result is established. See Theorem \ref{thm:unionbdd} below.}
\end{proof}

Let $P^*$ be (any one of) the permutation that maps $A$ to the strongly embedded template in $B$,  $\widetilde P$ be (any one of) the permutation that maps $A$ to the weakly embedded template in $B$, and $P$ be an arbitrary permutation in $\Pi_n$ that does not map $A$ to the weakly embedded template in $B$. Let $C_v = p(1-p)$. Finally let $T_1$ be the set of nodes that $P$ correctly aligns within the strongly embedded template, and $T_2$ the set of nodes that $P$ matches correctly to the weakly embedded template. 

We first consider the contribution to the objective function of the edge disagreement induced by $P$. {Define $D_E(P):=\operatorname{tr}\left(\tA\widetilde{P}\tB\widetilde{P}^T\right)- \operatorname{tr}\left(\tA P\tB P^T\right)$, which measures the objective function difference due to edge structures between the permutation that recovers the weakly embedded template, $\tilde P$, and any other permutation matrix $P$.}

\begin{prop}
\label{prop:edgediff}
    Denote the counts of the correctly recovered template edges via $P$ and $\tilde P$ with the following table.
    \begin{table}[h!]
    \centering
    \begin{tabular}{c|c|c}
    \backslashbox{P}{$\widetilde P$}& Recovers edges in $ \binom{T_1}{2}\cap \binom{T_2}{2}$ &
    Recovers edges in $\binom{T_2}{2}\setminus \binom{T_1}{2}$\\
    \hline
      Recovers edges in $\binom{T_1}{2}\setminus \binom{T_2}{2}$&  $0$ & $h_1$\\
      \hline
      Recovers edges in $ \binom{T_1}{2}\cap \binom{T_2}{2}$ &  $j_1$      & $0$ \\
      \hline
      Recovers edges in $\binom{T_2}{2}\setminus \binom{T_1}{2}$ & $0$&$h_2$\\
      \hline
      Misaligned template edges & $j_2$ & $h_3$ 
    \end{tabular}
    \end{table}
    
    Then $j_1+j_2 = k(k-1)/2,0\leq h_1,h_2\leq \frac{m(m-1)}{2}-\frac{k(k-1)}{2}=\frac{(m-k)(m+k-1)}{2}$ and $\sum_{i=1}^3 h_i = \frac{(m-k)(m+k-1)}{2}$. {Further,} 

\begin{align}
\E(D_{E}(P)) = 8C_v\left\{j_2r_2+h_1(r_3-r_1)+h_3r_3\right\}
\end{align}
\end{prop}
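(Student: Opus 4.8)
The plan is to expand both trace terms over the template's edge set, reduce the expectation to a sum of pairwise covariances, and then bucket those contributions exactly as in the table.

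First I would exploit that $\tA=(2A-J_m)\oplus\mathbf{0}_{n-m,n-m}$ is hollow and supported on the top-left $m\times m$ block, so that for any permutation $\tau$ with matrix $P_\tau$,
$$\operatorname{tr}\!\left(\tA P_\tau\tB P_\tau^T\right)=\sum_{i,j=1}^m \tA_{ij}\tB_{\tau(i)\tau(j)}=2\!\!\sum_{\{i,j\}\in\binom{[m]}{2}}\!\!\tA_{ij}\tB_{\tau(i)\tau(j)},$$
using symmetry and $\tA_{ii}=0$. Applying this with $\tau=\tilde\sigma$ and $\tau=\sigma$ (the permutations of $\widetilde P$ and $P$) gives
$$D_E(P)=2\!\!\sum_{\{i,j\}\in\binom{[m]}{2}}\!\!\tA_{ij}\bigl(\tB_{\tilde\sigma(i)\tilde\sigma(j)}-\tB_{\sigma(i)\sigma(j)}\bigr).$$

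Next I would compute a single edge's expected contribution: if $A_{ij}$ and $B_{h\ell}$ are $\mathrm{Bernoulli}(p)$ with correlation $\rho$, then writing $\tA_{ij}=2A_{ij}-1$, $\tB_{h\ell}=2B_{h\ell}-1$ yields $\E[\tA_{ij}\tB_{h\ell}]=(2p-1)^2+4\rho\,C_v$ with $C_v=p(1-p)$, and crucially the ``$(2p-1)^2$'' part does not depend on $\rho$. Since $\sigma,\tilde\sigma$ are permutations and $i\neq j$, the entries $\tB_{\tilde\sigma(i)\tilde\sigma(j)}$ and $\tB_{\sigma(i)\sigma(j)}$ are off-diagonal, so the $(2p-1)^2$ terms cancel in each summand and
$$\E(D_E(P))=8C_v\!\!\sum_{\{i,j\}\in\binom{[m]}{2}}\!\!\bigl(\rho_{\tilde\sigma}(i,j)-\rho_\sigma(i,j)\bigr),$$
where $\rho_\tau(i,j):=\mathrm{corr}(A_{ij},B_{\tau(i)\tau(j)})\in\{r_1,r_2,r_3,0\}$ is read directly off the model's correlation structure. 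The last step is the tally: partition $\binom{[m]}{2}$ into the $\binom{k}{2}$ edges of $A$'s overlap block and the $\frac{(m-k)(m+k-1)}{2}$ edges touching the $(m-k)$-block. Since $\widetilde P$ aligns the weak template exactly, $\rho_{\tilde\sigma}=r_2$ on the first class and $\rho_{\tilde\sigma}=r_3$ on the second; since the model endows only the ``matched'' $(i,j)$--$(h,\ell)$ pairs with nonzero correlation, $\rho_\sigma\in\{r_2,0\}$ on overlap edges ($=r_2$ iff $P$ fixes both endpoints, i.e.\ the edge lies in $\binom{T_1\cap T_2}{2}$) and $\rho_\sigma\in\{r_1,r_3,0\}$ on the remaining edges, according to whether $P$ maps the edge correctly into the strong template, correctly into the weak template, or neither. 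Reading these into the table's cells, the agreement cells contribute $0$ (accounting for the four ``$0$'' entries and the counts $j_1$ and $h_2$), while the cells $h_1,j_2,h_3$ contribute $r_3-r_1$, $r_2$ and $r_3$ respectively; the constraints $j_1+j_2=\binom{k}{2}$ and $h_1+h_2+h_3=\frac{(m-k)(m+k-1)}{2}$ with $0\le h_1,h_2$ are just the statement that this is a genuine partition of the two edge classes. Summing gives $\E(D_E(P))=8C_v\{j_2r_2+h_1(r_3-r_1)+h_3r_3\}$.

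The main obstacle will be the bookkeeping: one must check that a ``misaligned'' edge really does have $\rho_\sigma=0$—i.e., that no accidental correlation is produced by sending one endpoint to its strong-template image and the other to its weak-template image, or by permuting nontrivially within the overlap block—and that the four ``$0$'' cells of the table are either combinatorially impossible or contribute nothing. Once the correlation value attached to each cell is pinned down, the rest is routine arithmetic.
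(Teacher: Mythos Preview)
Your proposal is correct and follows essentially the same route as the paper's proof: expand the trace over the top-left $m\times m$ block, compute $\E[\tA_{ij}\tB_{h\ell}]$ in terms of the edge correlation, observe that the correlation-free part cancels in the difference $D_E(P)$, and then bucket the remaining $8C_v\cdot\rho$ contributions according to the table. The paper writes this via $\E[4A_{ij}B_{\sigma(i)\sigma(j)}-2A_{ij}-2B_{\sigma(i)\sigma(j)}+1]$ and an indicator $g_{ij}$, while you go through $\E[\tA_{ij}\tB_{h\ell}]=(2p-1)^2+4\rho C_v$; these are the same computation. Your explicit flagging of the ``accidental correlation'' bookkeeping (swaps within a block, mixed strong/weak images) is a point the paper handles only implicitly via its $g_{ij}$ cases, so you are if anything more careful there.
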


\begin{proof}
    The equalities and inequalities involving $j$'s and $h$'s are trivial by counting. Also, we have
    \begin{align*}
        \mathbb{E}\left\{\operatorname{tr}\left(\tA P\tB P^T\right)\right\}=&\E\left\{\sum_{i=1}^m \sum_{j=1}^m (2A_{ij}-1)(2
        {B_{\sigma_P(i)\sigma_P(j)}}-1)\right\}\\
        =&\sum_{i=1}^m \sum_{j=1}^m \E\left[4A_{ij}
        {B_{\sigma_P(i)\sigma_P(j)}}
        -2A_{ij}-2{B_{\sigma_P(i)\sigma_P(j)}}+1\right]
    \end{align*}
    Thus, 
    \begin{align*}
   \E\left\{\operatorname{tr}\left(\tA P\tB P^T\right)\right\} =& -8\binom{m}{2}C_v + 4\sum_{i=1}^m\sum_{j=1}^m r_{g_{ij}}C_v+m^2
    \end{align*}
    where{, recall that we use $\binom{S}{2}$ to denote the collection of all 2-element subsets of the set $S$,}
    $$\displaystyle g_{ij} = \begin{cases}
        1, &\text{if }\{i,j\}\in \binom{T_1}{2}\setminus \binom{T_2}{2};\\
        2, &\text{if }\{i,j\}\in \binom{T_1}{2}\cap \binom{T_2}{2};\\
        3, &\text{if }\{i,j\}\in \binom{T_2}{2}\setminus \binom{T_1}{2};\\
        0, &\text{otherwise}.
    \end{cases}$$
    
    {Note that the constant terms $-8\binom{m}{2}C_v$ and $m^2$ cancel when taking differences. Also, $j_1$ and $h_2$ terms vanish since they correspond to cases where $r_{g_{ij}}$ values agree for $P$ and $\tilde P$, so we know}
    \begin{align*}
        \E(D_{E}(P))&=\E\left\{\operatorname{tr}\left(\tA\widetilde{P}\tB\widetilde{P}^T\right)- \operatorname{tr}\left(\tA P\tB P^T\right)\right\}\\
        &= 8C_v\left\{j_2r_2+h_1(r_3-r_1)+h_3r_3\right\}
        \end{align*}
\end{proof}

Now, we consider $S$ of the form
$$S=\bordermatrix{
    &m-k&k&m-k&n-2m+k\cr 
    m-k& S^{11}&S^{12}&S^{13}&S^{14}\cr
    k&S^{21}& S^{22}&S^{23}&S^{24}}$$
    where all entries of $S$ are independent---bounded in $[0,1]$---random variables, and where 
    \begin{align*}
    &\text{ the diagonal elements of }S^{11}\text{ have mean }\mu_1\\
        &\text{ the diagonal elements of }S^{22}\text{ have mean }\mu_2\\
        &\text{ the diagonal elements of }S^{13}\text{ have mean }\mu_3
    \end{align*}
    and all other entries have mean $\mu_4$. 
    
If the strongly embedded template is penalized, then $S$ is weighted via (where ``$\circ$'' is the matrix Hadamard product)
$${S^{(2, \epsilon)}}=\bordermatrix{
    &m-k&k&m-k&n-2m+k\cr 
    m-k& (\mathbf{1}_{m-k,m-k}-\epsilon I_{m-k})\circ S^{11}&S^{12}&S^{13}&S^{14}\cr
    k&S^{21}& (\mathbf{1}_{k,k}-\epsilon I_{k})\circ S^{22}&S^{23}&S^{24}}$$
    
    We next consider the contribution to the objective function of the features/similarity induced by $P$. {Like before, we define $D_F(P):= \operatorname{tr}\left(S^{(2,\epsilon)}\tilde{P}^T\right)- \operatorname{tr}\left(S^{(2,\epsilon)}P^T\right)$ which measures the objective function difference due to node similarities between the permutation that recovers the weakly embedded template, $\tilde P$, and any other permutation matrix $P$.}

\begin{prop}
\label{prop:nodediff}
    Denote the counts of the recovered template nodes via $P$ and $\tilde P$ with the following table
\begin{table}[h!]
    \centering
    \begin{tabular}{c|c|c}
    \backslashbox{P}{$\tilde{P}$}& \text{Recovers $T_1\cap T_2$} & \text{Recovers $T_2\setminus T_1$}\\
    \hline
    \text{Recovers $T_1\setminus T_2$} & $0$ & $b_1$\\
    \hline
    \text{Recovers $T_1\cap T_2$} & $a_1$&$0$\\
    \hline
    \text{Recovers $T_2\setminus T_1$} & $0$&$b_2$\\
    \hline
    \text{Misaligned template node}&$a_2$&$b_3$
    \end{tabular}
\end{table}
    
    Then $0\leq a_1\leq k, a_1+a_2=k, 0\leq b_1,b_2<m-k$ and $\sum_{i=1}^3b_i=m-k$. Also,
    
    \begin{align*}
\binom{a_1}{2}&=j_1,\quad
\binom{a_2}{2}+a_1a_2=j_2,\quad\binom{b_2}{2}+b_2a_1=h_2,\\  
\binom{b_1}{2}+b_1a_1&=h_1,\quad
\binom{b_3}{2}+b_1b_2+b_1b_3+b_2b_3+a_2(m-k)+b_3a_1=h_3.
\end{align*}
where $j$'s and $h$'s are the counts from the table of Proposition \ref{prop:edgediff} above. Moreover, 

\begin{align}
\E(D_{F}(P))= a_2[(1-\varepsilon)\mu_2-\mu_4]+b_1[\mu_3-(1-\varepsilon)\mu_1]+b_3[\mu_3-\mu_4].
\end{align}
\end{prop}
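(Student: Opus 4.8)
The plan is to reduce the feature-trace difference $D_F(P)$ to a sum over the $m$ template vertices, to evaluate the expectation of each summand from the mean structure of $S$ together with the precise location of the $(1-\varepsilon)$ penalty in $S^{(2,\epsilon)}$, and then to collapse the result using the partition identities $a_1+a_2=k$ and $b_1+b_2+b_3=m-k$. The auxiliary identities tying $a_1,a_2,b_1,b_2,b_3$ to the edge counts $j_1,j_2,h_1,h_2,h_3$ will be verified separately by elementary pair-counting.

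First I would fix the bookkeeping. For any $Q\in\Pi_n$ with associated map $\sigma_Q$ we have $\operatorname{tr}(S^{(2,\epsilon)}Q_{(1)}^T)=\sum_{i=1}^m S^{(2,\epsilon)}_{i,\sigma_Q(i)}$, so $D_F(P)=\sum_{i=1}^m\big(S^{(2,\epsilon)}_{i,\sigma_{\widetilde P}(i)}-S^{(2,\epsilon)}_{i,\sigma_P(i)}\big)$, and linearity of expectation lets us work summand by summand. Partition the vertices of $A$ into Group~I (the first $m-k$, which $P^*$ sends to region~$1$ of $B$ and $\widetilde P$ sends to region~$3$) and Group~II (the $k$ overlap vertices, sent by \emph{both} $P^*$ and $\widetilde P$ to region~$2$). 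The structural fact driving the computation is that in $S^{(2,\epsilon)}$ the factor $1-\varepsilon$ multiplies only the diagonal entries of $S^{11}$ and of $S^{22}$, and those are precisely the entries hit by a correct match into region~$1$ (for a Group~I vertex) and into region~$2$ (for a Group~II vertex); every other entry --- including the $\mu_3$-mean diagonal of $S^{13}$ and every ``wrong-region'' entry --- carries no penalty. Consequently $\widetilde P$ places each of the $m-k$ Group~I vertices on the $\mu_3$-mean diagonal of $S^{13}$ (no penalty) and each of the $k$ Group~II vertices on the $\mu_2$-mean diagonal of $S^{22}$ (penalty), so $\E\operatorname{tr}(S^{(2,\epsilon)}\widetilde P_{(1)}^T)=(m-k)\mu_3+k(1-\varepsilon)\mu_2$.

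Next I would compute $\E\operatorname{tr}(S^{(2,\epsilon)}P_{(1)}^T)$ using the table in the statement. Among the $k$ Group~II vertices, $a_1$ are placed correctly into region~$2$ (contributing mean $(1-\varepsilon)\mu_2$ each) and $a_2=k-a_1$ are misaligned; a misaligned Group~II vertex necessarily lands on a mean-$\mu_4$, unpenalized entry, since landing on the diagonal of $S^{22}$ \emph{is} the definition of correct placement. Among the $m-k$ Group~I vertices, $b_1$ are placed correctly into region~$1$ (mean $(1-\varepsilon)\mu_1$), $b_2$ are placed ``correctly'' into region~$3$ (mean $\mu_3$, unpenalized), and $b_3$ are misaligned (mean $\mu_4$, unpenalized for the same reason), with $b_1+b_2+b_3=m-k$. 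Hence $\E\operatorname{tr}(S^{(2,\epsilon)}P_{(1)}^T)=b_1(1-\varepsilon)\mu_1+b_2\mu_3+b_3\mu_4+a_1(1-\varepsilon)\mu_2+a_2\mu_4$. Subtracting, and rewriting $(m-k)\mu_3=(b_1+b_2+b_3)\mu_3$ and $k(1-\varepsilon)\mu_2=(a_1+a_2)(1-\varepsilon)\mu_2$, the $b_2\mu_3$ and $a_1(1-\varepsilon)\mu_2$ contributions cancel and exactly $\E(D_F(P))=a_2[(1-\varepsilon)\mu_2-\mu_4]+b_1[\mu_3-(1-\varepsilon)\mu_1]+b_3[\mu_3-\mu_4]$ survives. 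The stated ranges for $a_i,b_i$ are immediate from the group sizes (with the strict bounds $b_1,b_2<m-k$, resp.\ $a_1<k$, encoding that $P$ does not recover the weak template), and the node-to-edge identities --- e.g.\ $\binom{a_1}{2}=j_1$ and $\binom{b_3}{2}+b_1b_2+b_1b_3+b_2b_3+a_2(m-k)+b_3a_1=h_3$ --- follow by expanding $\binom{m-k}{2}$ and $(m-k)k$ according to the $a$- and $b$-classes and matching each vertex pair to its edge category in Proposition~\ref{prop:edgediff}.

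The only step requiring genuine care --- and hence the main obstacle --- is the penalty-location accounting above: one must confirm that $\widetilde P$ itself pays the $1-\varepsilon$ factor on its $k$ overlap matches, that none of $P$'s region-$3$ recoveries or misaligned matches are penalized, and that a misaligned match can never inadvertently sit on a penalized diagonal. Everything else is linear expansion and counting. The concentration of $D_F(P)$ (and of $D_E(P)$) about these means, together with the union bound over all $P\in\Pi_n$, is handled afterward in Theorems~\ref{thm:McD} and~\ref{thm:unionbdd}, not here.
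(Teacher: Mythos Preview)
Your proposal is correct and follows essentially the same approach as the paper: both compute $\E\operatorname{tr}(S^{(2,\epsilon)}Q_{(1)}^T)$ as a sum over the $m$ template vertices, classify each summand by which region $\sigma_Q(i)$ lands in (and whether the $(1-\varepsilon)$ penalty applies), and then subtract and cancel the $a_1$ and $b_2$ contributions. The paper compresses your penalty-location case analysis into the single indicator $(1-\varepsilon)^{\mathbbm{1}\{\sigma_P(i)=\sigma_{P^*}(i)\}}$ and dispatches the node-to-edge identities as ``trivial by counting,'' but the substance is identical.
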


\begin{proof}
    The equalities and inequalities involving $a$'s and $b$'s, as well as the relationship between $a$'s $b$'s and $j$'s, $h$'s are trivial by counting.\\
    Now, we have
    $$\begin{aligned}
\E\left[\operatorname{tr}\left(S^{(2,\epsilon)}P^T\right)\right]=&\E\left\{\sum_{i=1}^m  (1-\varepsilon)^{\mathbbm{1}\{\sigma_P(i) =\sigma_{P^*}(i)\}}S_{i,\sigma_P(i)}\right\}\\
        =&\sum_{i=1}^m (1-\varepsilon)^{\mathbbm{1}\{\sigma_P(i) =\sigma_{P^*}(i)\}}\E(S_{i,\sigma_P(i)})\\
        =& \sum_{i=1}^m (1-\varepsilon)^{\mathbbm{1}\{\sigma_P(i) =\sigma_{P^*}(i)\}} \mu_{w_i}
    \end{aligned}$$
    where $$\displaystyle w_{i} = \begin{cases}
        1, &\text{if }i\in T_1\setminus T_2;\\
        2, &\text{if }i\in T_1\cap T_2;\\
        3, &\text{if }i\in T_2\setminus T_1;\\
        4, &\text{otherwise}.
    \end{cases}$$

    $a_1$ and $b_2$ terms vanish when taking differences since they correspond to cases where $\mathbbm{1}\{\sigma_P(i) =\sigma_{P^*}(i)\}$ and $\mu_{w_{i}}$ values agree for $P$ and $\tilde P$, so we know
\begin{align*}
\E(D_{F}(P))&=\E\left\{\operatorname{tr}\left(S^{(2,\epsilon)}\tilde{P}^T\right)- \operatorname{tr}\left(S^{(2,\epsilon)}P^T\right)\right\}\\
&= a_2[(1-\varepsilon)\mu_2-\mu_4]+b_1[\mu_3-(1-\varepsilon)\mu_1]+b_3[\mu_3-\mu_4].
\end{align*}
\end{proof}

Let {$g(P, \tilde{P}) = \tilde {f}_{\varepsilon,1}(\tilde{P},\lambda)-\tilde {f}_{\varepsilon,1}({P},\lambda)$} so that 
\begin{align*}
    \E g(P, \widetilde{P})&=\E(D_E(P)+\lambda D_F(P))\\
    &=8C_v\Big\{\left(\binom{a_2}{2}+a_1a_2\right)r_2+\left(\binom{b_1}{2}+b_1a_1\right)(r_3-r_1)\\
    &\hspace{5mm}+\left(\binom{b_3}{2}+b_1b_2+b_1b_3+b_2b_3+a_2(m-k)+b_3a_1\right)r_3\Big\}\\
    &\hspace{5mm}+\lambda(a_2[(1-\varepsilon)\mu_2-\mu_4]+b_1[\mu_3-(1-\varepsilon)\mu_1]+b_3[\mu_3-\mu_4])\\
    &\geq 8C_v\Big\{\underbrace{\left(\binom{b_1+a_2+b_3}{2}+(b_1+a_2+b_3)(b_2+a_1)\right)}_{:=n_E} r_3-\left(\binom{b_1}{2}+b_1a_1\right)r_1\Big\}\\
&\hspace{5mm}+\lambda(\underbrace{a_2+b_1+b_3}_{:=n_F})*\underbrace{\min([(1-\varepsilon)\mu_2-\mu_4], [\mu_3-(1-\varepsilon)\mu_1], [\mu_3-\mu_4])}_{:=c_\varepsilon}\\
&=8C_v\left(n_E r_3-\left(\binom{b_1}{2}+b_1a_1\right)r_1\right)+\lambda n_Fc_{\varepsilon}.
\end{align*}
Considering the case where $b_1=m-k$ and $a_1=k$, we see that for the above expectation to be diverging to infinity, it suffices that
\begin{equation}
\label{eq:gr}
\lambda c_\varepsilon\gg (r_1-r_3)(m{+}k)C_v;
\end{equation}
Equation \ref{eq:gr} holds true under assumptions ii and iii of Theorem \ref{thm:main}.

Next, we state the McDiarmid's inequality and use it to show for any $P\in\Pi_n$, $g(P, \tilde P)$ concentrates around its expectation.

\begin{thm}[McDiarmid's inequality \cite{mcdiarmid1989method}]
\label{thm:McD}
    Let $X_1, \ldots, X_n$ be independent random variables, where $X_i\in \mathcal{X}_i$. Let $f: \mathcal{X}_1 \times \cdots \times \mathcal{X}_n \rightarrow \mathbb{R}$ be any function that satisfy: there exists $c_1, \ldots c_n$ such that for every $i\in [n]$ and every $\left(x_1, \ldots, x_n\right) \in \mathcal{X}_1 \times \ldots \times \mathcal{X}_n$, we have
    $$\sup _{x_i^{\prime} \in \mathcal{X}_i}\left|f\left(x_1, \ldots, x_{i-1}, x_i, x_{i+1}, \ldots, x_n\right)-f\left(x_1, \ldots, x_{i-1}, x_i^{\prime}, x_{i+1}, \ldots, x_n\right)\right| \leq c_i$$
    Then for any $t>0$, we have
    $$\Prb\left(f\left(X_1, \ldots, X_n\right)-\mathbb{E}\left[f\left(X_1, \ldots, X_n\right)\right] \geq t\right) \leq \exp \left(-\frac{2 t^2}{\sum_{i=1}^n c_i^2}\right)$$
\end{thm}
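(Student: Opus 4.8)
The plan is to derive McDiarmid's inequality as a consequence of the Azuma--Hoeffding bound applied to the Doob martingale obtained by revealing the coordinates $X_1,\ldots,X_n$ one at a time. Concretely, I would set $Y_0 = \mathbb{E}[f(X_1,\ldots,X_n)]$ and, for $i=1,\ldots,n$, define the conditional expectations $Y_i = \mathbb{E}[f(X_1,\ldots,X_n)\mid X_1,\ldots,X_i]$, so that $Y_n = f(X_1,\ldots,X_n)$ and $(Y_i)_{i=0}^n$ is a martingale with respect to the filtration generated by $X_1,\ldots,X_i$. Writing $D_i = Y_i - Y_{i-1}$ for the martingale differences, the telescoping identity gives $f(X_1,\ldots,X_n) - \mathbb{E}[f] = \sum_{i=1}^n D_i$, which reduces the problem to an exponential tail bound on a sum of bounded martingale increments.

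The central step, and the main technical obstacle, is to show that each increment $D_i$ is confined (conditionally on the past $X_1,\ldots,X_{i-1}$) to an interval of width at most $c_i$. To this end I would introduce
$$\ell_i = \inf_{x}\, \mathbb{E}[f \mid X_1,\ldots,X_{i-1}, X_i = x] - Y_{i-1}, \qquad u_i = \sup_{x}\, \mathbb{E}[f \mid X_1,\ldots,X_{i-1}, X_i = x] - Y_{i-1},$$
and argue that $D_i \in [\ell_i, u_i]$ with $u_i - \ell_i \le c_i$. The crucial use of \emph{independence} enters here: because $X_{i+1},\ldots,X_n$ are independent of $X_i$, the conditional expectation $\mathbb{E}[f\mid X_1,\ldots,X_{i-1},X_i = x]$ is obtained by integrating $f$ against the \emph{same} product law of the tail variables for every value $x$. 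Comparing two values $x,x'$ and integrating the bounded-differences hypothesis $|f(\ldots,x,\ldots) - f(\ldots,x',\ldots)| \le c_i$ pointwise against that common law yields $u_i - \ell_i \le c_i$.

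Granting the bounded increments, I would apply Hoeffding's lemma conditionally: since $D_i$ lies in an interval of length at most $c_i$ and $\mathbb{E}[D_i\mid X_1,\ldots,X_{i-1}] = 0$, for any $s>0$ we have $\mathbb{E}[e^{s D_i}\mid X_1,\ldots,X_{i-1}] \le e^{s^2 c_i^2/8}$. Iterating the tower property from $i=n$ down to $i=1$ then gives
$$\mathbb{E}\!\left[e^{s(f - \mathbb{E} f)}\right] = \mathbb{E}\!\left[e^{s\sum_{i=1}^n D_i}\right] \le \exp\!\left(\frac{s^2}{8}\sum_{i=1}^n c_i^2\right).$$
Finally I would close with the Chernoff bound $\mathbb{P}(f - \mathbb{E} f \ge t) \le e^{-st}\,\mathbb{E}[e^{s(f-\mathbb{E}f)}]$, substitute the moment-generating estimate, and optimize the free parameter at $s = 4t / \sum_{i=1}^n c_i^2$, which produces the stated bound $\exp(-2t^2 / \sum_{i=1}^n c_i^2)$. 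The only genuinely delicate point is the independence-based width estimate in the second step; the remaining ingredients (Doob martingale, Hoeffding's lemma, Chernoff optimization) are standard and assemble routinely.
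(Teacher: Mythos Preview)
Your proposal is a correct and complete outline of the standard proof of McDiarmid's inequality via the Doob martingale and Azuma--Hoeffding, with the right handling of the independence step to bound the conditional increment widths by $c_i$.

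There is, however, nothing to compare against: in the paper this theorem is merely \emph{stated and cited} from \cite{mcdiarmid1989method} as a known tool, with no proof given. It is then applied as a black box in the argument for Theorem~\ref{thm:unionbdd}. So your proof supplies something the paper deliberately omits, and the approach you take is precisely the classical one from the cited reference.
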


From the forms of $D_E(P)$ and $D_F(P)$, we have that $g(P, \widetilde{P})$ is a function of at most (as $P$ and $\widetilde P$ agree on $b_2+a_1$ template vertices and disagree on the rest)
\begin{equation}
\label{eq:varct}
    \underbrace{2m-2b_2-2a_1}_{\text{from }D_F(P)}+\underbrace{\binom{m-b_2-a_1}{2}+(m-b_2-a_1)(b_2+a_1)}_{\text{from }D_E(P)}
\end{equation}
random variables, and changing any of the variables from $D_F(P)$ can change $g(P, \widetilde{P})$ by at most $4\lambda$, and changing any of the variables from $D_E(P)$ can change $g(P, \widetilde{P})$ by at most a bounded constant (bounded above by 8 for example).
Lastly, note that $m-b_2-a_1=b_1+b_3+a_2=n_F$ and 
$$\binom{m-b_2-a_1}{2}+(m-b_2-a_1)(b_2+a_1)=
\binom{b_1+a_2+b_3}{2}+(b_1+a_2+b_3)(b_2+a_1)=n_E.
$$
So
$$
n_E=n_F\left(  \frac{n_F-1}{2}+b_2+a_1\right)
$$

Assumption iv. of Theorem \ref{thm:main} gives that $(1-\varepsilon)\mu_2>\mu_4$ and $(1-\varepsilon)\mu_1<\mu_3$, and that the differences 
$$
(1-\varepsilon)\mu_2-\mu_4,\quad \mu_3-\mu_4, \text{ and }\mu_3-(1-\varepsilon)\mu_1
$$ are bounded away from $0$. Further, Eq.\@ \eqref{eq:gr} implies that 
\begin{align*}
\lambda n_Fc_{\varepsilon}&\gg n_F (r_1-r_3)(m+k)C_v\\
&\gtrsim \binom{b_1}{2}(r_1-r_3)C_v\\
&\gtrsim C_v\left( \left(\binom{b_1}{2}+b_1a_1\right)r_1-r_3 n_E\right)
\end{align*}
and $\E g(P, \widetilde{P})=\Omega(\lambda n_Fc_{\varepsilon})$.

{Apply theorem \ref{thm:McD} with $g(P, \tilde{P})$ as a function of $n_E+2n_F$ random variables from Eq. \ref{eq:varct} and $c_i=8$ for random variables from $D_E(P)$ and $c_i=4\lambda$ for random variables from $D_F(P)$, we see} that for $n$---and hence $m=m_n$---sufficiently large, (where $\xi>0$ is a constant that can change line--to--line)
\begin{align}
\mathbb{P}&({g(P, \widetilde{P})\leq 0})\leq \mathbb{P}(|g(P, \widetilde{P})-\mathbb{E}[g(P, \widetilde{P})]|\geq \mathbb{E}[g(P, \widetilde{P})])\notag\\
&\leq 2 \exp\left\{-\xi\frac{
C_v^2\left(n_E r_3-\left(\binom{b_1}{2}+b_1a_1\right)r_1\right)^2+\lambda^2 n_F^2c_{\varepsilon}^2}{n_E+\lambda^2 n_F}\right\}\notag\\
\label{eq:bnd}
&\leq 2 \exp\left\{\!-\xi\frac{
C_v^2\!\left(\left(b_3^2+a_2^2\!+\!(b_1\!+\!b_3\!+\!a_2)(b_2+a_1)\right) r_3\!-\!b_1^2(r_1\!-\!r_3)\right)^2\!+\!m^{2\alpha} n_F^2c_{\varepsilon}^2}{m^{2\alpha} n_F}\right\}
\end{align}

Next, we have the following lemma

\begin{lem}
\label{lem:binaryentropy}
    Let $H(\cdot)$ be the {binary} entropy function as defined in \cite{cover1999elements}. Partition $\Pi_n$ based on $\Pi_n\ni P\sim Q\in \Pi_n$ if the first $m$ rows of $P$ and $Q$ are exactly equal in to equivalence classes. Then within each partition, our objective function $\tilde{f}_{\varepsilon,1}(\cdot,\lambda)$ takes the same value; moreover, the order of each equivalence class is bounded above by $2^{k H\left(\frac{a_1}{k}\right)+(m-k)H\left(\frac{b_1}{m-k}\right)+(m-k)H\left(\frac{b_2}{m-k}\right)}n^{a_2+b_3}$.
\end{lem}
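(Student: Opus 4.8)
The plan is to prove the two assertions separately, the cardinality bound being the substantive one. For the first assertion, write $\sigma$ for the permutation underlying $P$, so that $\operatorname{tr}(\tilde A P\tilde B P^T)=\sum_{i,j=1}^n \tilde A_{ij}\tilde B_{\sigma(i)\sigma(j)}$; since $\tilde A=(2A-J_m)\oplus\mathbf{0}_{n-m,n-m}$ vanishes whenever $i>m$ or $j>m$, this sum involves only $\sigma(1),\dots,\sigma(m)$, i.e.\ only the first $m$ rows $P_{(1)}$ of $P$. The feature term $\lambda\operatorname{tr}(S^{(2,\epsilon)}P_{(1)}^T)$ manifestly depends on $P_{(1)}$ alone. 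Hence $\tilde f_{\varepsilon,1}(P,\lambda)$ is a function of $P_{(1)}$ only and is therefore constant on each equivalence class; moreover the sets $T_1,T_2$ and the counts $(a_1,a_2,b_1,b_2,b_3)$ of Propositions \ref{prop:edgediff}--\ref{prop:nodediff} depend only on the images of the template vertices, so each class carries a well-defined ``profile'' $(a_1,a_2,b_1,b_2,b_3)$. (A single class literally has $(n-m)!$ elements; I read the second assertion as a bound on the number of classes realizing a given profile, which is the quantity the union bound in Theorem \ref{thm:unionbdd} actually consumes.)

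For the bound I would count the equivalence classes (equivalently, the injections $P_{(1)}\colon[m]\hookrightarrow[n]$) realizing a fixed profile, with $a_1+a_2=k$ and $b_1+b_2+b_3=m-k$. Split the $m$ template vertices into the $k$ vertices that $P^*$ and $\widetilde P$ both send into the shared region $C^{22}$ (on which $P^*$ and $\widetilde P$ coincide), and the $m-k$ vertices that $P^*$ sends into the strong copy's private block $C^{11}$ and $\widetilde P$ sends into the weak copy's private block $C^{33}$ (two disjoint background vertex sets, so $P^*$ and $\widetilde P$ never agree there). A class of the prescribed profile is then determined by: (a) which $a_1$ of the $k$ shared vertices $P$ aligns correctly --- $\binom{k}{a_1}$ choices, their images forced to the common value of $P^*$ and $\widetilde P$; (b) which $b_1$ of the remaining vertices lie in $T_1\setminus T_2$ and which $b_2$ lie in $T_2\setminus T_1$ --- at most $\binom{m-k}{b_1}\binom{m-k}{b_2}$ choices (an overcount ignoring disjointness), their images forced to the corresponding $P^*$- or $\widetilde P$-coordinate, the remaining $b_3$ then being the misaligned non-shared vertices; and (c) the images of the $a_2+b_3$ misaligned template vertices, which can be any distinct as-yet-unused background vertices --- at most $n^{a_2+b_3}$ possibilities. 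Multiplying gives at most $\binom{k}{a_1}\binom{m-k}{b_1}\binom{m-k}{b_2}\,n^{a_2+b_3}$ classes per profile, and applying $\binom{N}{\ell}\le 2^{NH(\ell/N)}$ (with $H(0)=H(1)=0$ handling the degenerate cases $a_1\in\{0,k\}$ and $b_1,b_2\in\{0,m-k\}$) to each binomial coefficient yields the stated bound.

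The binomial--entropy inequality is routine; the step that needs care is the bookkeeping in (b)--(c) --- confirming that the shared/non-shared decomposition used here is exactly the one implicit in the definitions of $T_1,T_2$ and in the tables of Propositions \ref{prop:edgediff}--\ref{prop:nodediff}, that ``forced'' versus ``free'' images are correctly identified for each of the five categories of template vertex, and that the crude overcounts ($\binom{m-k}{b_1}\binom{m-k}{b_2}$ in place of the multinomial, and $n^{a_2+b_3}$ in place of a falling factorial) remain valid upper bounds. Finally, since the computations of $\E g(P,\widetilde P)$ and of the concentration estimate \eqref{eq:bnd} above already express both quantities in terms of the profile alone, this cardinality bound is exactly what is needed downstream: the union bound of Theorem \ref{thm:unionbdd} sums \eqref{eq:bnd}, weighted by this count, over all admissible profiles.
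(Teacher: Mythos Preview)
Your proposal is correct and follows essentially the same approach as the paper: both arguments observe that the padding kills all contributions from rows beyond the first $m$, and both obtain the combinatorial count $\binom{k}{a_1}\binom{m-k}{b_1}\binom{m-k}{b_2}\,n^{a_2+b_3}$ before applying the standard entropy bound $\binom{N}{\ell}\le 2^{NH(\ell/N)}$. Your parenthetical reading of the second assertion---that it bounds the \emph{number of equivalence classes} with a given profile rather than the literal size $(n-m)!$ of a single class---is exactly right and is how the paper actually uses the lemma in the union bound; the paper's own proof in fact writes $|C|=\binom{k}{a_1}\binom{m-k}{b_1}\binom{m-k}{b_2}\,n^{a_2+b_3}$, which is consistent only under your interpretation, and your write-up is clearer on this point than the paper itself.
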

\begin{proof}
    Fix an equivalence class $C$. For any $P, Q\in C$, the contribution due to edge terms are equal since the bottom $n-m$ rows of both $P$ and $Q$ contribute 0 to the value of the objective function due to padding. Also, the contribution due to feature terms is independent of the bottom $n-m$ rows of the permutation matrix. Thus $\tilde{f}_{\varepsilon,1}(P,\lambda)=\tilde{f}_{\varepsilon,1}(Q,\lambda)$ which proves the objective function takes the same value in $C$. \\
    Now, by (7.14) from \cite{cover1999elements}
    \begin{equation}
\label{eqn:BiEnBd}
    |C|=\binom{k}{a_1}\binom{m-k}{b_1}\binom{m-k}{b_2}n^{a_2+b_3}\leq 2^{k H\left(\frac{a_1}{k}\right)+(m-k)H\left(\frac{b_1}{m-k}\right)+(m-k)H\left(\frac{b_2}{m-k}\right)}n^{a_2+b_3}.
\end{equation}
\end{proof}

Finally, we apply the union bound on all permutation matrices and get
\begin{thm}
\label{thm:unionbdd}
    With the assumptions of Theorem \ref{thm:main}, by Lemma \ref{lem:binaryentropy} and Equation \ref{eq:bnd}, we have
    \begin{equation}
        \mathbb{P}(\exists\, P\neq \widetilde P\text { s.t. }{g(P, \widetilde{P})\leq 0})\leq2e^{-\omega(\log n)}
    \end{equation}
\end{thm}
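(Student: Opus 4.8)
\textit{Proof proposal.}

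The plan is to upgrade the single–permutation tail bound of Equation~\eqref{eq:bnd} to a bound over all of $\Pi_n$ by a union bound organised around the equivalence classes of Lemma~\ref{lem:binaryentropy}. Since $\tilde f_{\varepsilon,1}(\cdot,\lambda)$---and hence $g(\cdot,\widetilde P)$---is constant on each class, it suffices to union bound $\mathbb{P}(g(P,\widetilde P)\le 0)$ over one representative $P$ from each equivalence class other than $\widetilde\Pi$ itself (note $g\equiv 0$ on $\widetilde\Pi$, so the relevant event is really ``$\exists P\notin\widetilde\Pi$ with $g(P,\widetilde P)\le 0$''). Grouping the classes by \emph{type} $(a_1,a_2,b_1,b_2,b_3)$ (with $a_1+a_2=k$, $b_1+b_2+b_3=m-k$, so at most $(m+1)^3$ types), Lemma~\ref{lem:binaryentropy} bounds the number of classes of a fixed type by $2^{kH(a_1/k)+(m-k)H(b_1/(m-k))+(m-k)H(b_2/(m-k))}\,n^{a_2+b_3}$, and Equation~\eqref{eq:bnd} bounds $\mathbb{P}(g(P,\widetilde P)\le 0)$ for every $P$ of that type by $2e^{-\xi\,\mathrm{Exp}}$ with
$$\mathrm{Exp}=\frac{C_v^2\bigl((b_3^2+a_2^2+n_F(m-n_F))r_3-b_1^2(r_1-r_3)\bigr)^2+m^{2\alpha}n_F^2 c_\varepsilon^2}{m^{2\alpha}n_F},\qquad n_F=a_2+b_1+b_3 .$$
So the statement reduces to showing that $\mathrm{Exp}$ exceeds the logarithm of the class count by $\omega(\log n)$, uniformly over nontrivial types ($n_F\ge 1$); summing over the $\le (m+1)^3\le n^3$ types then loses only a polynomial factor and $n^3e^{-\omega(\log n)}=e^{-\omega(\log n)}$.

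First I would tame the combinatorial side. Using $\binom{N}{j}\le(eN/j)^j$ together with $\binom{k}{a_1}=\binom{k}{a_2}$ and $\binom{m-k}{b_2}=\binom{m-k}{b_1+b_3}$, and the fact that $a_2$, $b_1$, $b_1+b_3$, $a_2+b_3$ are each at most $n_F$, the logarithm of the class count is $O(n_F\log n)$. Hence it suffices to show $\mathrm{Exp}=\omega(n_F\log n)$. The feature contribution alone only gives $\mathrm{Exp}\ge \xi\,m^{2\alpha}n_F^2 c_\varepsilon^2/(m^{2\alpha}n_F)=\Theta(n_F)$ (assumption~(iv) keeping $c_\varepsilon$ bounded away from $0$), which is \emph{not} enough, so the edge term must be used. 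The difficulty---and the crux of the proof---is that the edge numerator $(b_3^2+a_2^2+n_F(m-n_F))r_3-b_1^2(r_1-r_3)$ can be small, or even negative, precisely when the disagreement between $P$ and $\widetilde P$ is concentrated in ``strong-template shift'' moves ($b_1$ close to $n_F$).

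I would resolve this by a short case analysis, driven throughout by $m^{1-\alpha}=\omega(\log^4 n)$ and $(r_1-r_3)\ll m^{\alpha-1}$. If $n_F\le m/2$, then $b_1^2(r_1-r_3)\le n_F^2(r_1-r_3)=o(n_F m)$ by assumption~(iii) and $n_F\le m$, so the edge numerator is at least $(m/2)n_F r_3-o(n_F m)\gtrsim n_F m$; hence $\mathrm{Exp}\gtrsim C_v^2 n_F^2m^2/(m^{2\alpha}n_F)=C_v^2\,n_F\,m^{2-2\alpha}$, and $m^{2-2\alpha}=(m^{1-\alpha})^2=\omega(\log^8 n)$ dominates $\log n$ with room to spare. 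If $n_F>m/2$ and the edge numerator has absolute value at least $m^{\alpha}n_F\log n$, then $\mathrm{Exp}\gtrsim C_v^2 (m^{\alpha}n_F\log n)^2/(m^{2\alpha}n_F)=C_v^2 n_F\log^2 n=\omega(n_F\log n)$ directly. In the remaining case, $n_F>m/2$ and the edge numerator is $O(m^{\alpha}n_F\log n)$; combined with $b_1^2(r_1-r_3)=o(m^{\alpha+1})$ this forces $n_F(m-n_F)$, $a_2^2$, $b_3^2$ to all be $O(m^{\alpha+1}\log n)$, hence $m-n_F$, $a_2$, $b_3$, and then $b_2=(m-n_F)+a_2-k$ and (via $b_1\le m-k$) also $k$, are all $o(m)$; so $n_F=\Theta(m)$, the class count is $e^{o(m)}$, while the feature term already gives $\mathrm{Exp}\ge\Theta(n_F)=\Theta(m)\gg o(m)$. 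In every case $\mathrm{Exp}-(\text{class log})=\omega(\log n)$.

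The main obstacle is exactly this balancing: the edge and feature contributions to the objective gap are each individually insufficient, and one must check that whenever the edge term degenerates, the combinatorial multiplicity $n^{a_2+b_3}$ and the entropy factors degenerate in tandem, so that their regimes of strength tile all types. Once the case analysis above is carried out, the final union bound and the collapse $2n^3 e^{-\omega(\log n)}=2e^{-\omega(\log n)}$ are routine bookkeeping.
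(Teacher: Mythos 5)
Your proposal is correct and follows the same overall strategy as the paper's proof: a union bound over the equivalence classes of Lemma \ref{lem:binaryentropy}, organized by the counts $(a_1,a_2,b_1,b_2,b_3)$, with the per-class probability controlled by Eq.~\eqref{eq:bnd}, and a case analysis showing the exponent beats the logarithm of the class multiplicity by $\omega(\log n)$ uniformly, after which summing over the polynomially many types is harmless. Where you differ is in how the cases are cut and how the multiplicity is bounded: the paper splits on whether $a_2^2$ or $b_3^2$ is $\Omega(m^{3/2+\alpha/2})$ and then on whether $b_1,b_2$ are $\Theta(m)$ or $o(m)$, using binary-entropy asymptotics such as $yH(x/y)=o(y)$ and $yH(x/y)=O(x\log y)$; you instead split on $n_F\le m/2$ versus $n_F>m/2$ and on the magnitude of the edge numerator, bound all multiplicities via $\binom{N}{j}\le (eN/j)^j$ (giving the clean uniform estimate $O(n_F\log n)$ for the class-log), and in the degenerate regime argue that a small edge numerator forces $m-n_F$, $a_2$, $b_3$ to be small, so the feature term $\Theta(n_F)=\Theta(m)$ wins. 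Your trichotomy makes explicit the structural point that when the edge signal degenerates the combinatorial multiplicity degenerates with it, while the paper's split yields slightly more explicit exponents in each regime; both buy the same conclusion. Two small write-up points: in your final case, the claim that the class count is $e^{o(m)}$ cannot be deduced from the bare $o(m)$ statements (because of the factor $n^{a_2+b_3}$) but does follow from the quantitative bounds $a_2,b_3=O(m^{(1+\alpha)/2}\log^{1/2}n)$ and $m-n_F=O(m^{\alpha}\log n)$ combined with $m^{1-\alpha}=\omega(\log^4 n)$, which you state one clause earlier; and $k=o(m)$ there follows more directly from $k=a_1+a_2\le(m-n_F)+a_2$ than from $b_1\le m-k$ (and is not in fact needed once the quantitative bounds are used).
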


\begin{proof}
Apply a union bound over all $P$ with the same counts ($a$'s and $b$'s), modulo the equivalence between permutations with the same first $m$ rows; further apply a union bound over all possible counts of the $a$'s and $b$'s, we have (where $\xi>0$ is a constant that can change line--to--line)
\begin{align}
&\mathbb{P}(\exists\, P\neq \widetilde P\text { s.t. }{g(P, \widetilde{P})\leq 0})\leq\notag\\ &\sum_{a_1=0}^k\sum_{b_1=0}^{m-k}\sum_{b_2=0}^{m-k-b_1}
2\exp\Bigg\{\!-\xi\frac{
C_v^2\!\left(\left(b_3^2+a_2^2\!+\!(b_1\!+\!b_3\!+\!a_2)(b_2+a_1)\right) r_3\!-\!b_1^2(r_1\!-\!r_3)\right)^2\!+\!m^{2\alpha} n_F^2c_{\varepsilon}^2}{m^{2\alpha} n_F}\notag\\
&\hspace{10mm}+k H\left(\frac{a_1}{k}\right)\log(2)
 +(m-k)H\left(\frac{b_1}{m-k}\right)\log(2)+(m-k)H\left(\frac{b_2}{m-k}\right)\log(2)\notag\\
&\hspace{10mm}+(a_2+b_3)\log(n)\Bigg\}\label{eq:union}
\end{align}
To tackle the above Eq. \eqref{eq:union}, we consider several cases:
\begin{itemize}
\item If
$b_3^2=\Omega(m^{3/2+\alpha/2})$ or
$a_2^2=\Omega(m^{3/2+\alpha/2})$, then the exponential in Eq.\@ \eqref{eq:union} can be bounded above by (where $\xi>0$ is a constant that can change line--to--line)
\begin{align}
\label{eq:bnd3}
2 \exp\left\{-\xi\frac{
C_v^2m^{3+\alpha}}{m^{2\alpha} n_F}+m\log n\right\}\leq 2 \exp\Big\{-\xi
C_v^2m(\underbrace{m^{2-\alpha}-\log n}_{:=\Theta(m^{2-\alpha})})\Big\}=2e^{-\omega(\log n)}
\end{align}
\item We next consider the case where both 
$b_3^2=o(m^{3/2+\alpha/2})$ and
$a_2^2=o(m^{3/2+\alpha/2})$, in this case either $b_1=\Theta(m)$, $b_2=\Theta(m)$, or both are $\Theta(m)$.
\begin{itemize}
\item $b_1=\Theta(m)$ and $b_2=\Theta(m)$:  Then the $b_1*b_2$ term in the exponent in Eq.\@ \eqref{eq:union} yields the upper bound (where $\xi>0$ is a constant that can change line--to--line)
\begin{align}
\label{eq:bnd4}
2 \exp\left\{-\xi\frac{
C_v^2m^{4}}{m^{2\alpha} n_F}+m\log n\right\}\leq 2 \exp\Big\{-\xi
C_v^2m(\underbrace{m^{3-2\alpha}-\log n}_{:=\Theta(m^{3-2\alpha})})\Big\}=2e^{-\omega(\log n)}
\end{align}

\item $b_2=o(m)$: In this case, $b_1(1+o(1))=m-k$.
Note that if $x(1+o(1))=y$, then $x/y(1+o(1))=1$.
If $x\leq y$, then $1-x/y=o(1)$.
As $\lim_{z\rightarrow 0}-z\log_2 z=0$, we then have
\begin{align*}
yH(x/y)&=y\left[\frac{x}{y}\underbrace{\left[-\log_2 \left(\frac{x}{y}\right)\right]}_{=o(1)}+\underbrace{\left[-\left(1-\frac{x}{y}\right)\log_2\left(1-\frac{x}{y}\right)\right]}_{=o(1)}  \right]\\
&=o(y)
\end{align*}
Similarly, if $y=\omega(1)$ and $x=o(y)$, then \begin{align*}
yH(x/y)&=y\Bigg[\underbrace{-\frac{x}{y}\log_2 \left(\frac{x}{y}\right)}_{=o(1)}+\left(1-\frac{x}{y}\right)\underbrace{\left[-\log_2\left(1-\frac{x}{y}\right)\right]}_{=o(1)}  \Bigg]\\
&=o(y)
\end{align*}
The $n_F^2*m^{2\alpha}$ term in the exponent in Eq.\@ \eqref{eq:union} yields the upper bound (where $\xi>0$ is a constant that can change line--to--line; note by assumption $c_\varepsilon$ is bounded away from $0$)
\begin{align}
\label{eq:bnd5}
&2 \exp\Big\{-\xi\frac{
n_F^2*m^{2\alpha}c_\varepsilon^2}{m^{2\alpha} n_F}+k H\left(\frac{a_1}{k}\right)\log(2)
 +(m-k)H\left(\frac{b_1}{m-k}\right)\log(2)\notag\\
&\hspace{10mm}+(m-k)H\left(\frac{b_2}{m-k}\right)\log(2)+(a_2+b_3)\log(n)\Bigg\}\notag\\
&\leq 2 \exp\Big\{-\xi
mc_\varepsilon^2+o(m)+o(m^{3/4+\alpha/4}\log(n))\Bigg\}=2e^{-\omega(\log n)}
\end{align}
\item $b_1=o(m)$: In this case, $b_2(1+o(1))=(m-k)$.
We make use here of the alternate bound where if $y=\omega(1)$ and $x=o(y)$, then (as $
\lim_{z\rightarrow 0}-\log_2(1-z)/z=1/\log(2)$)
\begin{align*}
yH(x/y)&=y\Bigg[-\frac{x}{y}\log_2 \left(\frac{x}{y}\right)-\left(1-\frac{x}{y}\right)\log_2\left(1-\frac{x}{y}\right)  \Bigg]\\
&=O(x\log_2(y))+O\left(y\frac{x}{y}\left(1-\frac{x}{y}\right)\right)=O(x\log(y)).
\end{align*}
We also note that here
\begin{align*}
(&m-k)H\left(\frac{b_2}{m-k}\right)\\
&=(m-k)H\left(1-\frac{b_1+b_3}{m-k}\right)\\
&=(m-k)\left[-\left(1-\frac{b_1+b_3}{m-k}\right)\log_2\left(1-\frac{b_1+b_3}{m-k}\right)-\left(\frac{b_1+b_3}{m-k}\right)\log_2\left(\frac{b_1+b_3}{m-k}\right)
\right]\\
&=O\left((m-k)\frac{b_1+b_3}{m-k}\left(1-\frac{b_1+b_3}{m-k}\right)\right)+O((b_1+b_3)\log_2(m-k))\\
&=O((b_1+b_3)\log(m-k))
\end{align*}
The $n_F^2*b_2^2$ term in the exponent in Eq.\@ \eqref{eq:union} yields the upper bound (where $\xi>0$ is a constant that can change line--to--line, and we use $k H\left(\frac{a_1}{k}\right)=k H\left(1-\frac{a_2}{k}\right)$ as above)
\begin{align}
\label{eq:bnd6}
&2 \exp\Big\{-\xi
n_Fm^{2-2\alpha}+k H\left(\frac{a_1}{k}\right)\log(2)
 +(m-k)H\left(\frac{b_1}{m-k}\right)\log(2)\notag\\
&\hspace{10mm}+(m-k)H\left(\frac{b_2}{m-k}\right)\log(2)+(a_2+b_3)\log(n)\Bigg\}\notag\\
&\leq 2 \exp\Big\{-\xi
n_Fm^{2-2\alpha}+O((b_1+b_3+a_2)\log(m))+n_F\log(n)\Bigg\}=2e^{-\omega(\log n)}
\end{align}
\end{itemize}
\end{itemize}
Therefore, by Equations \ref{eq:bnd3}, \ref{eq:bnd4}, \ref{eq:bnd5} and \ref{eq:bnd6}, we have that individual summands of the right-hand-side of Eq.\@ \eqref{eq:union} is bound above by $2e^{-\omega(\log n)}$, and thus
\begin{align*}
&\mathbb{P}(\exists\, P\neq \widetilde P\text { s.t. }{g(P, \widetilde{P})\leq 0})\leq 2e^{-\omega(\log n)} \left(\sum_{a_1=0}^k\sum_{b_1=0}^{m-k}\sum_{b_2=0}^{m-k-b_1} 1\right) \leq 2e^{-\omega(\log n)+\ln k+2\ln m}=2e^{-\omega(\log n)}
\end{align*}
as desired.
\end{proof}

\subsection{More Experiments}

\subsubsection{Additional two overlapping templates experiments}
\label{app:N2}
We plot the cases of $k=15$ and $k=40$ using the seeded \mg algorithm with 5 seeds randomly selected from the overlapping nodes of $B^{(1)}$ and $B^{(2)}$, for the same parameters as described in \ref{sec:N2}.

\begin{figure}[t!]
    \centering
    \includegraphics[width = 0.8\textwidth]{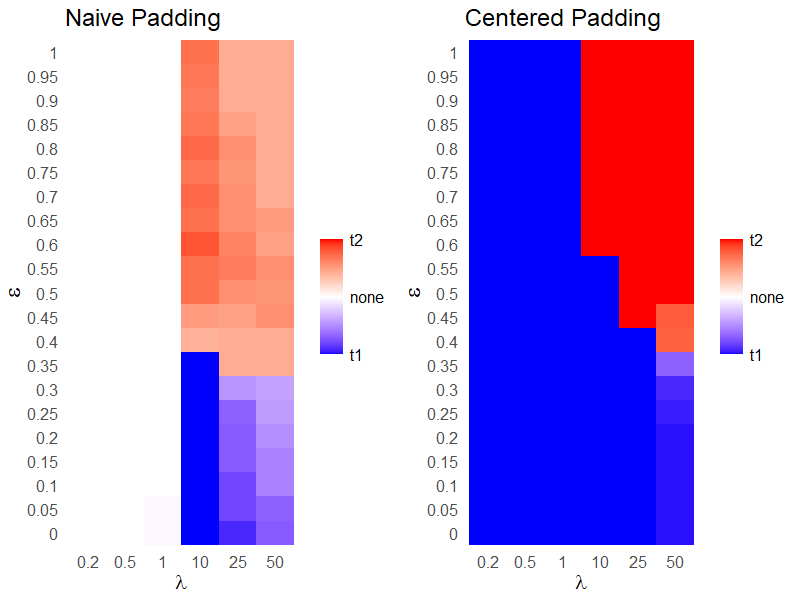}
    \caption{We fix $k=15$ and use the seeded \mg  algorithm to match $A$ with $B$ using 5 seeds randomly selected from the overlapping nodes of $B^{(1)}$ and $B^{(2)}$ as described in Section \ref{sec:N2}.
    We plot the recovering results over $\varepsilon$ (here $\varepsilon$ is used to penalize the stronger of the two embedded templates) and $\lambda$, averaged by 20 Monte-Carlo simulations, where blue means the recovered template is closer to $B^{(1)}$ (the stronger embedded template), red means the recovered template is closer to $B^{(2)}$ (the weaker embedded template), and white means there is a tie in the 20 simulations or the recovered template is not close to either $B^{(1)}$ or $B^{(2)}$.}
\end{figure}

\begin{figure}[t!]
    \centering
    \includegraphics[width = 0.8\textwidth]{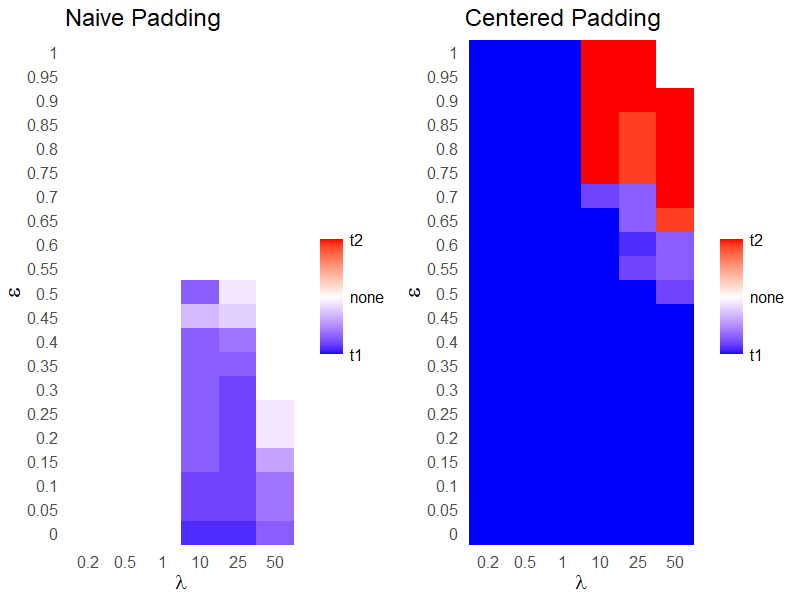}
    \label{fig:N2_k40}
    \caption{We fix $k=40$ and use the seeded \mg  algorithm to match $A$ with $B$ using 5 seeds randomly selected from the overlapping nodes of $B^{(1)}$ and $B^{(2)}$ as described in Section \ref{sec:N2}.
    We plot the recovering results over $\varepsilon$ (here $\varepsilon$ is used to penalize the stronger of the two embedded templates) and $\lambda$, averaged by 20 Monte-Carlo simulations, where blue means the recovered template is closer to $B^{(1)}$ (the stronger embedded template), red means the recovered template is closer to $B^{(2)}$ (the weaker embedded template), and white means there is a tie in the 20 simulations or the recovered template is not close to either $B^{(1)}$ or $B^{(2)}$. Note that the naive padding never recovered anything closer to $B^{(2)}$.}
    \label{fig:bigK}
\end{figure}

\subsubsection{Additional three overlapping templates experiments}
\label{app:N3}
We first plot the simulated results where the parameters correspond to Figure \ref{fig:N3} of \ref{sec:N3} but with the naive padding. 

\begin{figure}[t!]
    \centering
    \includegraphics[width = 0.7\textwidth]{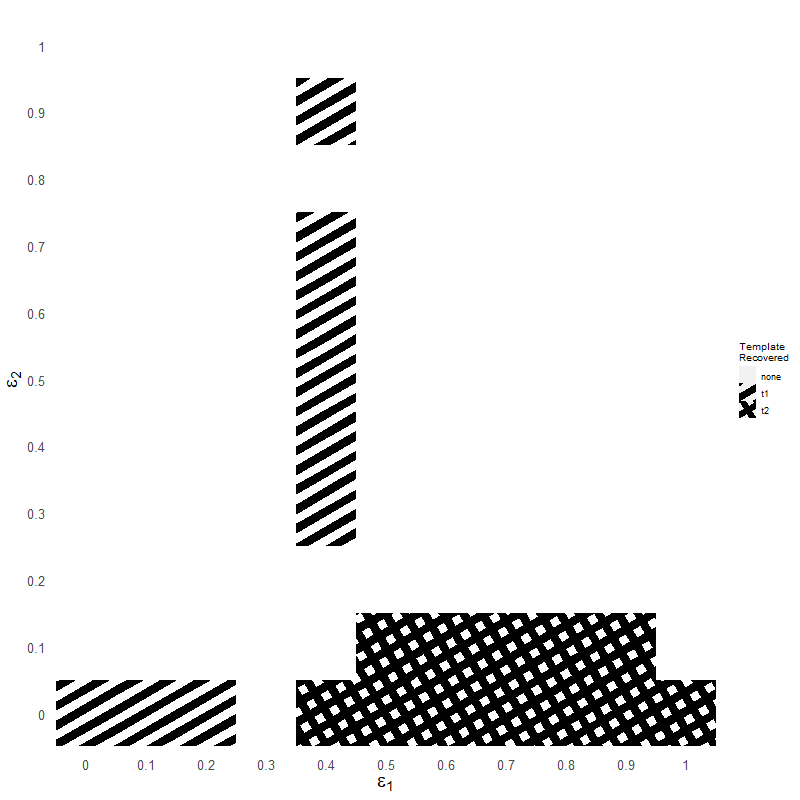}
    \caption{We fix $k=10,\lambda=25$ and use the seeded \mg algorithm with the naive padding to match $A$ with $B$ using 5 seeds randomly selected from the overlapping nodes of $B^{(1)}, B^{(2)}$ and $B^{(3)}$, where $B^{(1)}, B^{(2)}$ and $B^{(3)}$ are induced subgraph of $B$ such that graphs $A$ and $B$ follows multiple correlated ER model as described in Section \ref{sec:N3}. We plot the recovering results over $\varepsilon_1$ (penalty applied to the diagonal elements of $S^{(11)}, S^{(22)}$) and $\varepsilon_2$ (penalty applied to the diagonal elements of $S^{(13)}, S^{(22)}$), averaged by 20 Monte-Carlo simulations. 
In the figure, the different patterns represent which template was recovered (in majority): t1 for template 1, t2 for template 2, and we fail to recover template 3, with white squares corresponding to the case when none of the three templates was recovered.}
\end{figure}

Next, with the same correlation parameters $\{r_j\}_{j=1}^4$ and feature mean parameters $\{\mu_j\}_{j=1}^5$, we plot the simulated results for $k=10, \lambda=10$ and $k=40,\lambda=25$ using the centered padding. 

\begin{figure}[t!]
    \centering
    \includegraphics[width = 0.7\textwidth]{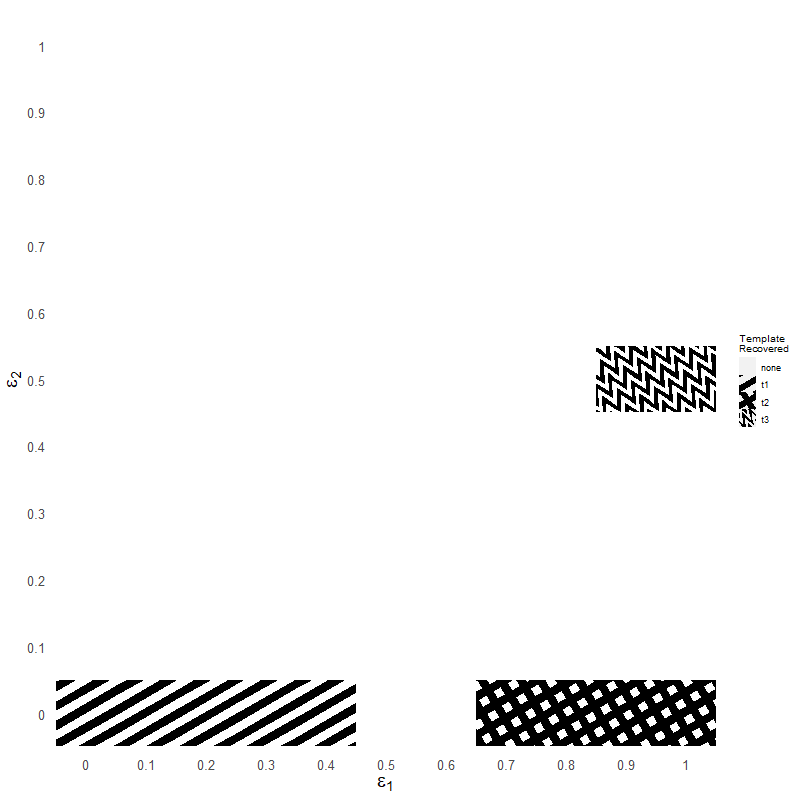}
    \caption{We fix $k=10,\lambda=10$ and use the seeded \mg algorithm with the centered padding to match $A$ with $B$ using 5 seeds randomly selected from the overlapping nodes of $B^{(1)}, B^{(2)}$ and $B^{(3)}$, where $B^{(1)}, B^{(2)}$ and $B^{(3)}$ are induced subgraph of $B$ such that graphs $A$ and $B$ follows multiple correlated ER model as described in Section \ref{sec:N3}. We plot the recovering results over $\varepsilon_1$ (penalty applied to the diagonal elements of $S^{(11)}, S^{(22)}$) and $\varepsilon_2$ (penalty applied to the diagonal elements of $S^{(13)}, S^{(22)}$), averaged by 20 Monte-Carlo simulations. 
In the figure, the different patterns represent which template was recovered (in majority): t1 for template 1, t2 for template 2, and t3 for template 3, with white squares corresponding to the case when none of the three templates was recovered.}
\end{figure}

\begin{figure}[t!]
    \centering
    \includegraphics[width = 0.7\textwidth]{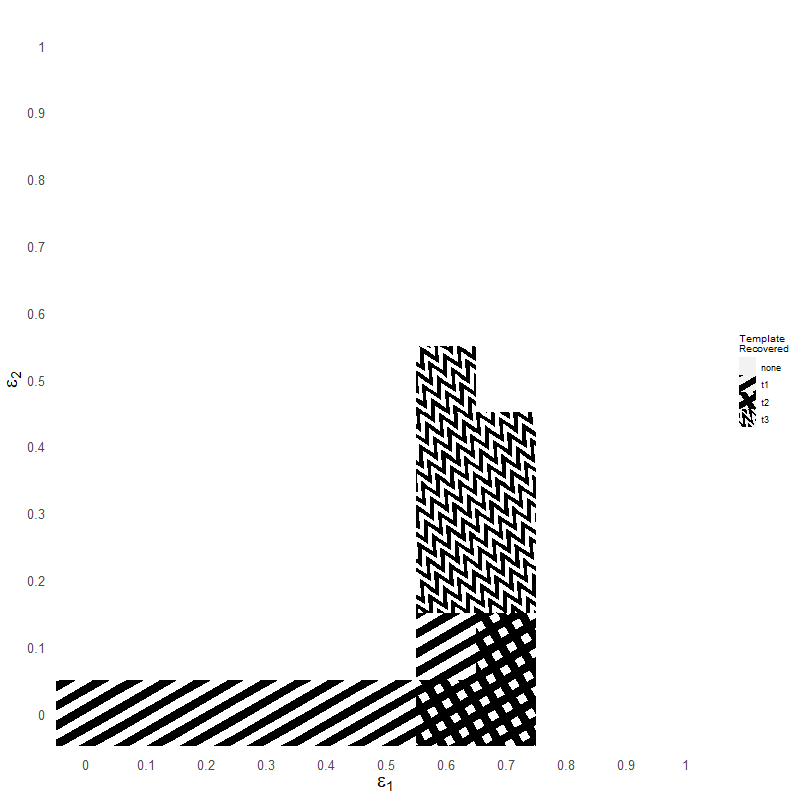}
    \caption{We fix $k=40,\lambda=25$ and use the seeded \mg algorithm with the centered padding to match $A$ with $B$ using 5 seeds randomly selected from the overlapping nodes of $B^{(1)}, B^{(2)}$ and $B^{(3)}$, where $B^{(1)}, B^{(2)}$ and $B^{(3)}$ are induced subgraph of $B$ such that graphs $A$ and $B$ follows multiple correlated ER model as described in Section \ref{sec:N3}. We plot the recovering results over $\varepsilon_1$ (penalty applied to the diagonal elements of $S^{(11)}, S^{(22)}$) and $\varepsilon_2$ (penalty applied to the diagonal elements of $S^{(13)}, S^{(22)}$), averaged by 20 Monte-Carlo simulations. 
In the figure, the different patterns represent which template was recovered (in majority): t1 for template 1, t2 for template 2, and t3 for template 3, with white squares corresponding to the case when none of the three templates was recovered.}
\end{figure}
\clearpage

\subsubsection{Additional Brain MRI plots}
\label{app:brain}
\begin{figure}[b!]
    \centering
    \includegraphics[width=0.7\textwidth]{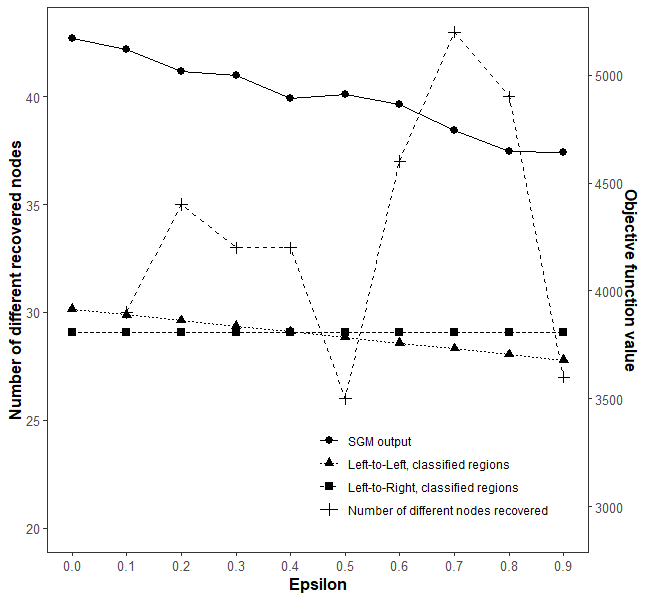}
    \caption{We run our proposed algorithm using the seeded \mg algorithm with 500 restarts and 5 seeds selected from the node pairs $\{(s_j,w_j)\}_{j=1}^6$ as described in \ref{sec:brain}, taking the result with highest objective function value (Eq. \ref{npgmp}, $\lambda=1$) as the output. For each $\varepsilon$, we compute the GM objective function value (left axis) of the resulting matrix with the template; we also computed the objective function value with respect to the alignment given by the template to the same classified brain region in the left hemisphere in $B$ (Left--to--Left in the plot), as well as the objective function value given by the template to the symmetric region from the right hemisphere in $B$ (Left--to--right in the plot). Also for $\varepsilon>0$, we calculated the number of novel nodes recovered in each matching compared to the subgraph detected with $\varepsilon=0$ (right axis).}
    \label{fig:MRI_App}
\end{figure}
\clearpage

\subsubsection{Additional TKB templates}
\label{app:tkb}
\begin{figure}[b!]
    \centering
    \includegraphics[width=1\textwidth]{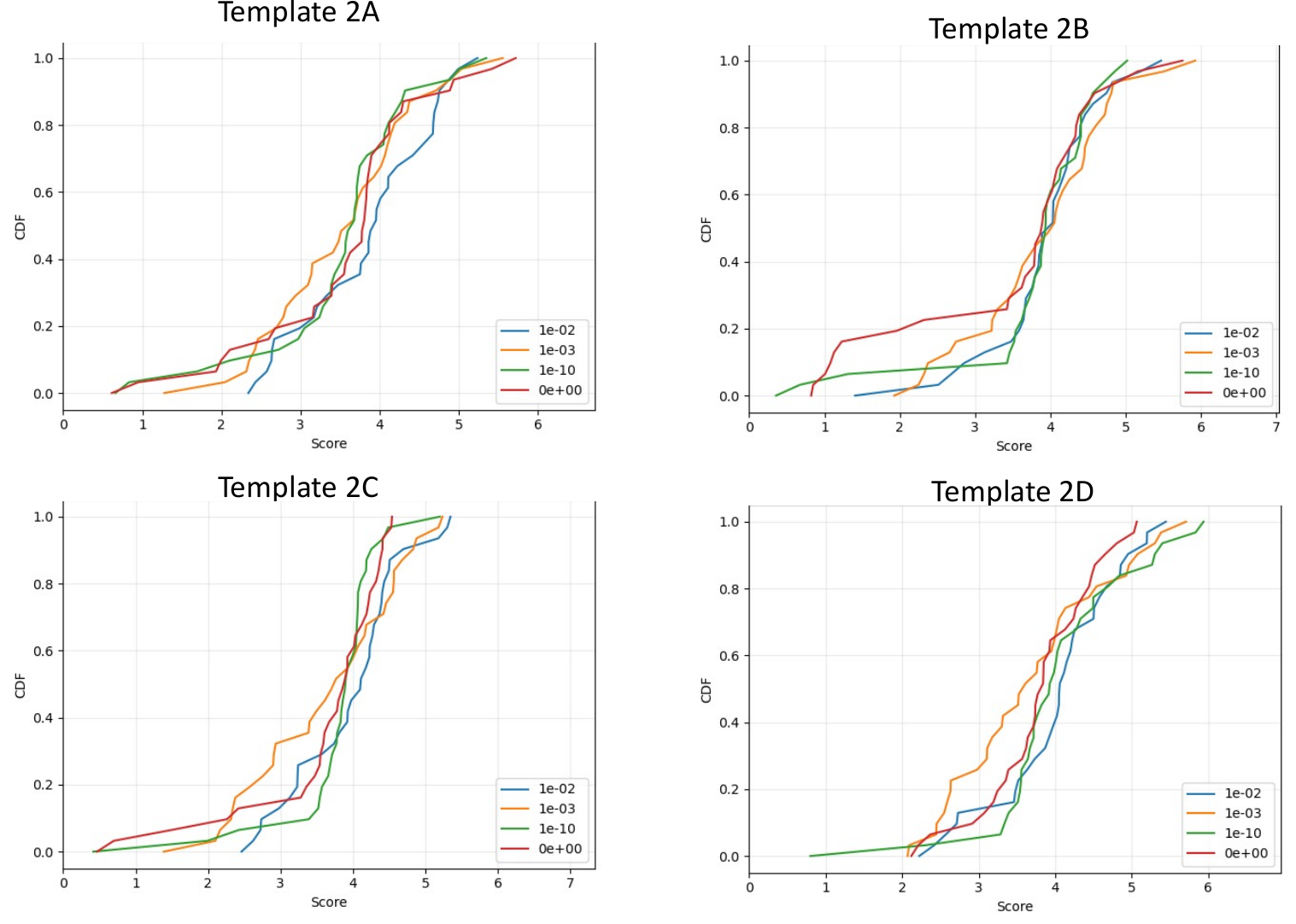}
    \caption{We run 32 random restarts of the \mg algorithm for each template recovery, plotting the empirical CDF of the GED of the recovered templates.
Different penalization values are represented with different colors in the plot.}
    \label{fig:tkb2}
\end{figure}

\begin{figure}[b!]
    \centering
    \includegraphics[width=1\textwidth]{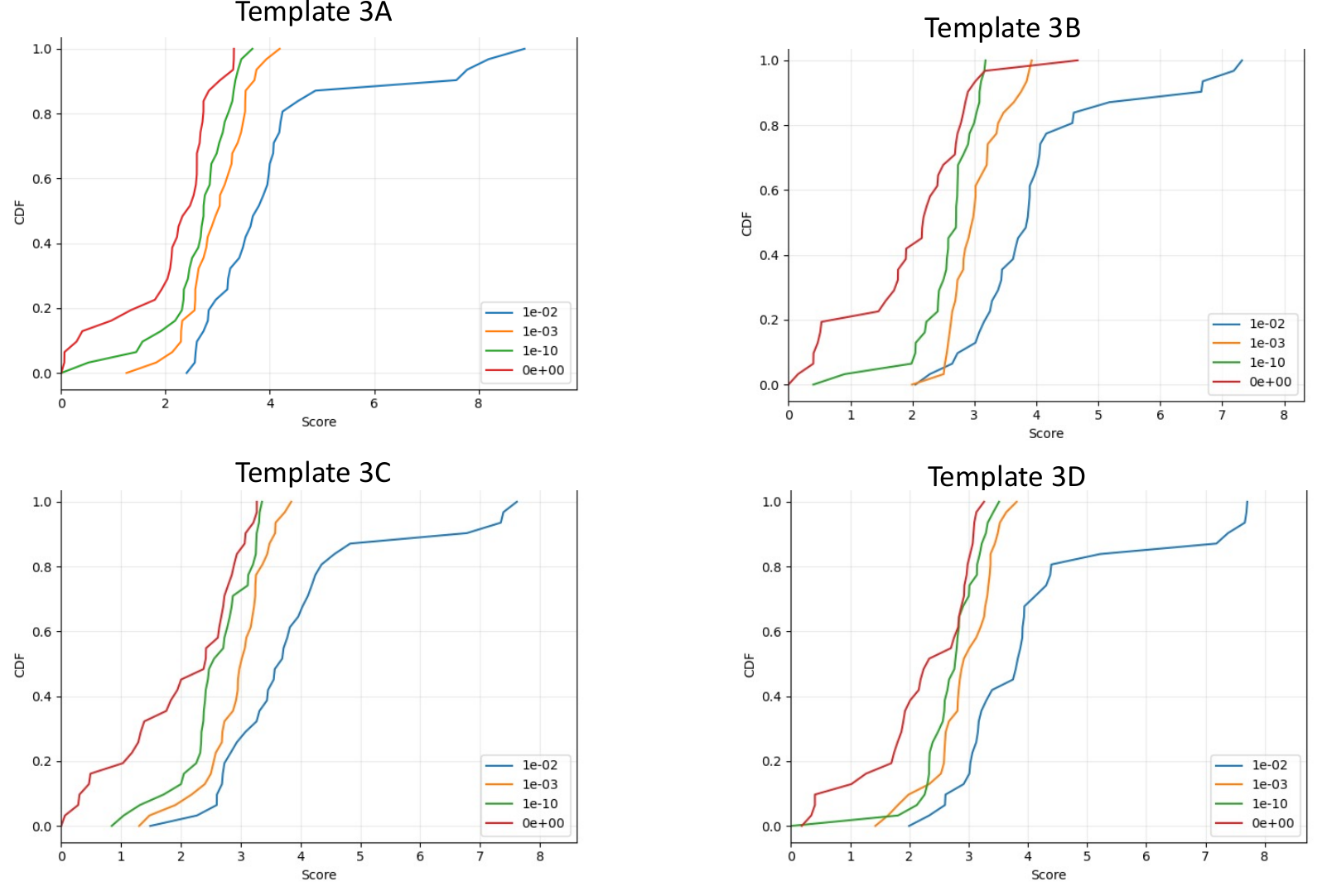}
    \caption{We run 32 random restarts of the \mg algorithm for each template recovery, plotting the empirical CDF of the GED of the recovered templates.
Different penalization values are represented with different colors in the plot.}
    \label{fig:tkb3}
\end{figure}

\vfill

\end{document}